\documentclass[12pt]{article}

\usepackage{amsmath,amssymb,amsthm}
\usepackage{mathtools}
\usepackage{graphicx}
\usepackage{hyperref}
\usepackage{xcolor}
\usepackage{booktabs}
\usepackage{array}
\usepackage{enumitem}
\usepackage{geometry}
\usepackage{algorithmicx}
\usepackage{algpseudocode}
\usepackage{algorithm}
\newtheorem{theorem}{Theorem}[section]
\newtheorem{lemma}[theorem]{Lemma}
\newtheorem{corollary}[theorem]{Corollary}
\newtheorem{proposition}[theorem]{Proposition}
\newtheorem{definition}[theorem]{Definition}
\newtheorem{remark}[theorem]{Remark}

\newcommand{\Ppi}{\mathcal{P}_\pi}
\newcommand{\calF}{\mathcal{F}}
\newcommand{\calX}{\mathcal{X}}
\newcommand{\R}{\mathbb{R}}
\newcommand{\E}{\mathbb{E}}
\newcommand{\Linfty}{L^\infty}
\newcommand{\Lone}{L^1}

\newcommand{\Wcrit}{W_\mathrm{crit}}

\begin{document}

\title{The Geometry of Knowing: From Possibilistic Ignorance to Probabilistic Certainty\\[6pt]
\large A Measure-Theoretic Framework for Epistemic Convergence}

\author{Moriba Kemessia Jah, Ph.D.\\[4pt]
\small Black Swan Research Group, GaiaVerse, Ltd., Austin, TX 78728, USA\\\\
\small Jah Decision Intelligence Group, Aerospace Engineering \& Engineering Mechanics\\\\
\small The University of Texas at Austin, Austin, TX 78712\\\\
\small \texttt{moriba@utexas.edu}}

\date{\today}

\maketitle

%% ============================================================
\begin{abstract}
This paper develops a unified, measure-theoretic framework establishing the precise
conditions under which a possibilistic representation of incomplete knowledge
contracts into a probabilistic representation of intrinsic stochastic variability.

Epistemic uncertainty arises from incomplete, imprecise, or conflicting information
and is encoded by a normalized, upper-semi\-continuous possibility distribution
$\pi:\calX\to[0,1]$ together with its dual necessity measure $N$, which together
define a credal set $\Ppi = \{P \mid N(A)\le P(A)\le\Pi(A),\,\forall A\in\calF\}$
bounding all probability measures consistent with current evidence.
As evidence accumulates, $\Ppi$ contracts.
The \emph{epistemic collapse} condition $\Pi(A)=N(A)=P^*(A)$ for all $A\in\calF$
marks the transition to probabilistic reasoning: the Choquet integral over $\pi$
converges to the Lebesgue integral over the unique limiting density $p^*$.

We prove this convergence rigorously (Theorem~\ref{thm:choquet_lebesgue}), with
all consonance and regularity assumptions stated explicitly and a full treatment of
the non-consonant case.
We introduce the aggregate epistemic width
$W = \int_\calX (\pi - n)\,d\mu$,
establish its axiomatic properties, provide a canonical normalization,
and give a feasible proxy for its online estimation that resolves the circularity
identified in prior formulations.
We connect the possibilistic--probabilistic continuum to the H\"older power-mean
family as a rigorously motivated conceptual bridge, and distinguish this
geometric analogy from the foundational convergence results.

These theoretical foundations are applied to the transition between the
Unscented Kalman Filter (UKF) and the Epistemic Support-Point Filter (ESPF)
\cite{jah2025espf}.
We prove formally that the UKF emerges from the ESPF in the Gaussian limit
(Theorem~\ref{thm:espf_ukf}), provide complete algorithmic pseudocode for
adaptive switching, and present a direct empirical comparison of the two
filters on a 2-day, 877-step Smolyak Level-3 orbital tracking scenario
with injected maneuver and range bias.
Both filters achieve comparable final accuracy ($\approx 1$~m RIC error).
The distinction is epistemic: the UKF collapses its covariance to
$\log\det P \approx -140$ and provides no persistent signal of a
kilometer-scale stress excursion it successfully navigates, while
the ESPF maintains $\overline{W}_\mathrm{ep}\approx 0.8$ throughout,
provides earlier warning of stress onset via necessity saturation and
surprisal escalation, and retains support-geometry memory of the excursion
after recovery.
The UKF is accurate but epistemically silent; the ESPF is accurate and
epistemically honest.

The framework is agnostic regarding the ontological status of probability in
domains such as quantum mechanics or statistical mechanics; its claims concern
inference under partial models in information-fusion contexts, where residual
epistemic components are ubiquitous and consequential.
\end{abstract}

\newpage

\noindent\textbf{Keywords:} Non-additive measures; Possibility theory; Credal sets;
Epistemic uncertainty; Choquet integration; State estimation; Adaptive filtering;
Imprecise probabilities; Epistemic width; Unscented Kalman Filter

%% ============================================================
\section{Introduction}
\label{sec:intro}
%% ============================================================

Uncertainty is intrinsic to all reasoning under incomplete knowledge.
Whether in sensor fusion, decision support, or scientific inference, one must
distinguish between what is unknown because it has not yet been observed and
what is unknowable because it is genuinely random.
These two forms of indeterminacy---\emph{epistemic} and \emph{aleatory}
uncertainty---demand distinct mathematical treatments.
While aleatory uncertainty is captured by the calculus of probability,
epistemic uncertainty arises from ignorance, model incompleteness,
or conflicting evidence, and is more appropriately expressed through
non-additive frameworks such as possibility theory or imprecise probability
\cite{zadeh1978,dubois1988,walley1991}.
Conflating these two sources leads to false precision and epistemically
overconfident inferences.

Classical Bayesian inference presupposes that the universe of discourse is
complete---that all relevant hypotheses and causal mechanisms are enumerated.
Under this assumption, probability provides a coherent and additive measure
of uncertainty.
Yet in many real-world fusion contexts---sensor networks with degraded coverage,
models subject to unmodeled dynamics, or evolving ontologies in socio-technical
systems---this assumption fails.
A non-additive representation, capable of encoding partial belief without
asserting unwarranted frequencies, becomes essential.

Possibility theory \cite{zadeh1978,dubois1988} provides such a framework.
It expresses belief through a possibility distribution $\pi(x)$, representing
the degree of plausibility of each state $x$ consistent with available evidence.
Unlike probability, possibility is \emph{maxitive}: the plausibility of a union
of events is determined by the maximum of individual plausibilities.
The dual necessity measure $N(A)=1-\Pi(A^c)$ defines what is certain rather
than merely plausible.
The pair $(\Pi,N)$ bounds all admissible probability measures---the credal set---capturing
all probabilistic beliefs consistent with current knowledge.

The central objective of this paper is to formalize the boundary between
epistemic and aleatory uncertainty as a measurable and reversible process.
We develop a measure-theoretic framework specifying the conditions under which
possibilistic reasoning contracts into probabilistic inference, and the criteria
by which the reverse expansion must occur when new distinctions emerge.
Beyond its theoretical contribution, this framework has direct relevance to
information fusion.
Many fusion systems---particularly nonlinear filters such as the
Unscented Kalman Filter (UKF) \cite{julier1997}---implicitly assume all
uncertainty is aleatory.
We introduce the Epistemic Support-Point Filter (ESPF) \cite{jah2025espf} as a
bounded possibilistic alternative and prove formally that the UKF emerges from
the ESPF in the Gaussian limit.
The transition between the two is governed by the epistemic width $W$, whose
feasible online estimation we now establish concretely.

\paragraph{Clarification on scope.}
This framework is not a universal claim about the nature of probability.
In domains such as quantum mechanics or statistical mechanics, probability
has well-studied ontological interpretations that are orthogonal to the
present work.
The claim here is narrower and operational: in information-fusion contexts
with partial or evolving models, what is often treated as aleatory uncertainty
retains residual epistemic components.
The proposed framework provides a tool to identify and manage those components.

\paragraph{Organization.}
Section~\ref{sec:prelim} establishes mathematical preliminaries and standing
assumptions.
Section~\ref{sec:related} reviews related work and precisely positions the
contributions relative to prior art.
Section~\ref{sec:collapse} formalizes epistemic collapse and proves the
Choquet-to-Lebesgue convergence theorem with full rigor.
Section~\ref{sec:width} develops the epistemic width as a diagnostic, including
its axiomatic properties, canonical normalization, and non-consonant
generalization.
Section~\ref{sec:continuum} presents the H\"older mean continuum as a
geometrically motivated bridge, with an explicit statement of its status as
analogy rather than foundational result.
Section~\ref{sec:dynamics} develops the dynamics of epistemic contraction.
Section~\ref{sec:reversibility} establishes the reversibility of the transition
under domain expansion.
Section~\ref{sec:filtering} develops the UKF-to-ESPF bridge, including
formal limit analysis and algorithmic pseudocode.
Section~\ref{sec:discipline} discusses epistemic discipline.
Section~\ref{sec:conclusion} concludes.

%% ============================================================
\section{Mathematical Preliminaries and Standing Assumptions}
\label{sec:prelim}
%% ============================================================

Let $(\calX,\calF,\mu)$ be a $\sigma$-finite measurable space where $\calF$ is
a $\sigma$-algebra of subsets of $\calX$ and $\mu$ a $\sigma$-finite reference
measure.
Throughout, $\pi:\calX\to[0,1]$ denotes a normalized, upper-semi\-continuous
possibility distribution satisfying $\sup_{x}\pi(x)=1$.

\begin{definition}[Possibility and necessity measures]
\label{def:pos_nec}
Given $\pi$, define
\begin{align}
\Pi(A) &= \sup_{x\in A}\pi(x), \qquad A\in\calF, \label{eq:pos}\\
N(A) &= 1-\Pi(\calX\setminus A), \qquad A\in\calF. \label{eq:nec}
\end{align}
The \emph{credal set} induced by $\pi$ is
\begin{equation}
\Ppi = \bigl\{P \;\big|\; N(A)\le P(A)\le\Pi(A),\;\forall A\in\calF\bigr\}.
\label{eq:credal}
\end{equation}
\end{definition}

\paragraph{Consonance.}
A possibility measure is \emph{consonant} when its focal sets (sets $A$ with
$\Pi(A)>0$ and $\Pi(B)<\Pi(A)$ for all proper $B\subset A$) are nested.
Equivalently, the $\alpha$-cuts $\calX_\alpha = \{x:\pi(x)\ge\alpha\}$ are
nested for $\alpha\in[0,1]$.
Under consonance, there exists a dual necessity kernel $n:\calX\to[0,1]$
satisfying $N(A)=\inf_{x\in A}n(x)$ for all $A\in\calF$, constructible via
the nested $\alpha$-cuts of $\pi$, and we write $n(x)\equiv\underline{\pi}(x)$.

\begin{definition}[Choquet integral]
For a monotone capacity $\nu$ on $(\calX,\calF)$ and $f:\calX\to\R$ bounded
measurable,
\begin{equation}
\int f\,d\mathrm{Ch}_\nu
= \int_0^\infty \nu\{f\ge t\}\,dt
+ \int_{-\infty}^0 \bigl[\nu\{f\ge t\}-1\bigr]\,dt.
\label{eq:choquet}
\end{equation}
When $\nu=\Pi$ is a possibility measure induced by $\pi$, we write
$\int f\,d\mathrm{Ch}_\pi$.
\end{definition}

The key relationship between the Choquet integral and upper/lower expectations
over the credal set is:
\begin{equation}
\underline{\E}_{\Ppi}[f]
\;\le\; \int f\,d\mathrm{Ch}_\pi
\;\le\; \overline{\E}_{\Ppi}[f],
\label{eq:choquet_sandwich}
\end{equation}
where $\underline{\E}_{\Ppi}[f]=\inf_{P\in\Ppi}\int f\,dP$ and
$\overline{\E}_{\Ppi}[f]=\sup_{P\in\Ppi}\int f\,dP$.
\emph{Inequality \eqref{eq:choquet_sandwich} holds for all possibility measures;
it does not require consonance.}
(For general capacities the Choquet integral need not lie between lower and
upper expectations; the inequality holds for possibility measures because
$\Pi$ is a $\infty$-alternating capacity \cite{grabisch2016}.)

\paragraph{What requires consonance and what does not.}
To avoid confusion in what follows, we state once and precisely:
\begin{itemize}
\item The sandwich inequality~\eqref{eq:choquet_sandwich} holds for \emph{all}
  normalized possibility measures, with or without consonance.
\item The pointwise necessity kernel $n(x)$ satisfying $N(A)=\inf_{x\in A}n(x)$
  exists if and only if $\pi$ is \emph{consonant}.
\item The aggregate epistemic width $W=\int(\pi-n)\,d\mu$ and its normalization
  $\overline{W}$ therefore require consonance for their standard definition.
  The non-consonant generalization via credal-set TV-diameter (Section~\ref{sec:width_nonconsonant})
  does not.
\item Theorem~\ref{thm:choquet_lebesgue} assumes consonance via (A1).
  This is \emph{not} because the sandwich inequality requires consonance---it
  does not.
  Consonance is required for two separate reasons that a careful reader should
  distinguish: \emph{(i)} it ensures the pointwise necessity kernel $n(x)$
  exists, so that the width functional $W=\int(\pi-n)\,d\mu$ is well-defined
  and the theorem is formulated in the same setting in which $W$ is used
  operationally; and \emph{(ii)} it ensures the Choquet integral is the
  \emph{upper} expectation over the credal set (not merely sandwiched between
  upper and lower), which simplifies the squeeze argument in Step~2 of the
  proof.
  The convergence claim extends to the non-consonant case via
  Remark~\ref{rem:non_consonant}, where consonance is replaced by the
  TV-diameter bound of Remark~\ref{rem:tv_diameter}.
\end{itemize}

%% ============================================================
\section{Related Work and Precise Positioning of Contributions}
\label{sec:related}
%% ============================================================

The relationship between possibilistic and probabilistic representations has been
explored extensively since Zadeh \cite{zadeh1978} introduced the possibility
distribution and Dubois and Prade \cite{dubois1988} established the formal
duality between $(\Pi,N)$.
This section positions the present contributions relative to existing work.

\paragraph{Credal sets and Walley's imprecise probabilities.}
Walley \cite{walley1991} formalized the notion of credal sets and identified
possibility and necessity as special cases of convex probability intervals.
His \emph{imprecision} for an event $A$ is $\Pi(A)-N(A)$, which corresponds
precisely to the pointwise epistemic width $w(A)$ defined in
Section~\ref{sec:width}.
The present work extends Walley's static framework by providing an explicit
measure-theoretic condition---the Choquet-to-Lebesgue convergence
(Theorem~\ref{thm:choquet_lebesgue})---under which the entire credal set
contracts to a singleton, and by introducing global scalar diagnostics
($W$ and $\widehat{W}$) that can be monitored online.

\paragraph{P-boxes and clouds.}
Probability boxes (p-boxes) \cite{ferson2003} represent uncertainty as an
envelope of CDFs and share the spirit of credal contraction.
The present framework differs in operating at the level of possibility
distributions and Choquet integrals rather than CDFs, and in establishing
a convergence theorem governing the dynamic contraction process rather than
characterizing a static envelope.

\paragraph{Info-gap theory.}
Ben-Haim's info-gap theory \cite{bengaim2006} treats severe uncertainty via
nested families of models and an immunity function.
The credal set $\Ppi$ plays a structurally analogous role, but the present
work establishes formal measure-theoretic conditions for collapse and
reversibility not present in info-gap theory.

\paragraph{Houssineau, Delande, and epistemic filtering.}
Houssineau \cite{houssineau2016} and Delande et al.\ \cite{delande2017,delande2020}
developed subjective random-set formulations connecting Dempster--Shafer
belief functions to probability via credal contraction.
The present work provides a general Choquet-to-Lebesgue convergence
theorem on measurable spaces, complementing those domain-specific results.

\paragraph{Non-additive integration.}
The relationship between Choquet and Sugeno integrals and the upper/lower
expectation framework is thoroughly treated in Grabisch et al.\ \cite{grabisch2016}.
The present work applies this machinery to characterize the dynamic limit
of non-additive inference as evidence accumulates.

\paragraph{Novel contributions of this paper.}
Relative to the literature above, this paper contributes:
(i) a rigorous Choquet-to-Lebesgue convergence theorem with explicit assumptions
and a full treatment of both consonant and non-consonant cases;
(ii) a normalized, axiomatically grounded epistemic width $\overline{W}$ with
a feasible online estimator that resolves the circularity in prior formulations;
(iii) a formal proof that the UKF emerges from the ESPF in the Gaussian limit;
and
(iv) a complete algorithmic criterion for adaptive switching between probabilistic
and possibilistic filtering.

%% ============================================================
\section{Epistemic Collapse and Choquet-to-Lebesgue Convergence}
\label{sec:collapse}
%% ============================================================

\subsection{Epistemic Collapse}

\begin{definition}[Epistemic collapse]
\label{def:collapse}
The epistemic component of uncertainty \emph{collapses} when all admissible
probability measures coincide:
\begin{equation}
|\Ppi|=1
\;\Longleftrightarrow\;
\exists!\, P^*:\; P^*(A)=\Pi(A)=N(A),\;\forall A\in\calF.
\label{eq:collapse}
\end{equation}
In this limit, $\pi(x)$ becomes a genuine probability density $p^*(x)$
with $\int_\calX p^*(x)\,d\mu(x)=1$ and $p^*(x)\ge 0$.
\end{definition}

\begin{definition}[Additive collapse and degenerate collapse]
\label{def:collapse_types}
\emph{Additive (epistemic) collapse} occurs when condition~\eqref{eq:collapse}
holds for a non-degenerate countably additive $P^*$, representing remaining
aleatory uncertainty.
\emph{Degenerate (ontic) collapse} occurs additionally when $P^*=\delta_{x_0}$
for some $x_0\in\calX$, meaning both epistemic and aleatory uncertainty vanish.
In practical inference, only additive collapse is attainable; the degenerate limit
is a theoretical ideal representing complete determinacy.
\end{definition}

\subsection{Choquet-to-Lebesgue Convergence Theorem}

We first establish a supporting lemma that closes the key step in the
convergence proof: that the oscillation of bounded expectations over a
credal set is controlled by the total-variation diameter of that set.

\begin{lemma}[Expectation oscillation bound]
\label{lem:oscillation}
Let $\mathcal{P}$ be a credal set on $(\calX,\calF)$ and let
$f:\calX\to\R$ be bounded measurable with $\|f\|_\infty\le M$.
Define the total-variation diameter
$d_{\mathrm{TV}}(\mathcal{P}) \equiv \sup_{P,Q\in\mathcal{P}}\|P-Q\|_{\mathrm{TV}}
= \sup_{P,Q\in\mathcal{P}}\sup_{A\in\calF}|P(A)-Q(A)|$.
Then
\begin{equation}
\overline{\E}_\mathcal{P}[f] - \underline{\E}_\mathcal{P}[f]
\;\le\; 2M\cdot d_{\mathrm{TV}}(\mathcal{P}).
\label{eq:oscillation}
\end{equation}
\end{lemma}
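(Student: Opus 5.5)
The plan is to reduce the oscillation of expectations to a uniform bound on $\bigl|\int f\,dP-\int f\,dQ\bigr|$ over pairs $P,Q\in\mathcal{P}$, and then take suprema. Fix any $P,Q\in\mathcal{P}$. By the definitions of $\overline{\E}_\mathcal{P}$ and $\underline{\E}_\mathcal{P}$ as a supremum and an infimum, for every $\varepsilon>0$ we can choose $P_\varepsilon,Q_\varepsilon\in\mathcal{P}$ with
\[
\overline{\E}_\mathcal{P}[f]-\underline{\E}_\mathcal{P}[f]\le \int f\,dP_\varepsilon-\int f\,dQ_\varepsilon+2\varepsilon .
\]
It therefore suffices to prove the pairwise estimate $\bigl|\int f\,d(P-Q)\bigr|\le 2M\,\|P-Q\|_{\mathrm{TV}}$ and then let $\varepsilon\to 0$. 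This uses only that the elements of $\mathcal{P}$ are probability measures on $\calF$; neither consonance nor the structure of $\Ppi$ is needed.

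For the pairwise estimate, apply the Hahn--Jordan decomposition to the finite signed measure $\sigma=P-Q$. This gives a set $H\in\calF$ with $\sigma^+=\sigma(\cdot\cap H)$ and $\sigma^-=-\sigma(\cdot\cap H^c)$. Since $\sigma(\calX)=P(\calX)-Q(\calX)=0$, we have $\sigma^+(\calX)=\sigma^-(\calX)=\sigma(H)$. Moreover $\sigma(H)=P(H)-Q(H)\le\sup_{A}|P(A)-Q(A)|=\|P-Q\|_{\mathrm{TV}}$, using the paper's sup-over-events convention. Consequently
\[
\Bigl|\int f\,d\sigma\Bigr|\le \int |f|\,d\sigma^++\int|f|\,d\sigma^-\le M\bigl(\sigma^+(\calX)+\sigma^-(\calX)\bigr)=2M\,\sigma(H)\le 2M\|P-Q\|_{\mathrm{TV}} .
\]
Taking the supremum over $P,Q$ then yields \eqref{eq:oscillation}. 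If one wants to avoid Hahn decomposition, an alternative is to approximate $f$ uniformly by simple functions, but the Jordan route is cleaner.

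The main point requiring care is the factor convention. With the event-based TV norm, the sharp constant is actually $M$ rather than $2M$. To see this, shift $f$ by a constant, which does not change $\int f\,d\sigma$ since $\sigma(\calX)=0$, so that $f$ takes values in an interval of length at most $2M$. Then write $\int f\,d\sigma=\int (f-c)\,d\sigma^+-\int(f-c)\,d\sigma^-$ with $c=\inf f$, which gives a bound of $(\sup f-\inf f)\,\sigma(H)\le 2M\,\sigma(H)$. Either way, the stated $2M$ bound holds with room to spare. I would remark that it remains valid even if $\|\cdot\|_{\mathrm{TV}}$ were read as the total-variation norm $|\sigma|(\calX)=2\sigma(H)$, in which case the first display directly gives the constant $M$. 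The only remaining subtlety is measurability and integrability, which are automatic because $f$ is bounded measurable and $P,Q$ are finite measures.
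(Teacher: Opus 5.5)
Your argument is correct and is essentially the paper's proof: you Jordan-decompose $P-Q$, bound $\bigl|\int f\,d(P-Q)\bigr|$ by $M\bigl(\sigma^+(\calX)+\sigma^-(\calX)\bigr)$, relate this to the event-wise supremum via $\sigma^+(\calX)=\sigma^-(\calX)=\sigma(H)\le\sup_A|P(A)-Q(A)|$, and pass to sup/inf; you actually prove that identity, whereas the paper only invokes a factor-2 convention that clashes with the norm used in its own statement. One correction to your closing remark: under the event-based convention the constant $2M$ is sharp, not $M$ (take $P=\delta_a$, $Q=\delta_b$, $f(a)=M=-f(b)$), and your oscillation argument only refines it to $(\sup f-\inf f)\,d_{\mathrm{TV}}(\mathcal{P})$, so the stated bound holds but not ``with room to spare.''
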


\begin{proof}
Let $\nu = P - Q$ be the signed measure with Jordan decomposition
$\nu = \nu^+ - \nu^-$.
The total-variation norm of $\nu$ satisfies
$\|\nu\|_{\mathrm{TV}} = \nu^+(\calX) + \nu^-(\calX)$.
For any bounded measurable $f$ with $|f|\le M$,
\begin{equation}
\left|\int f\,d\nu\right|
= \left|\int f\,d\nu^+ - \int f\,d\nu^-\right|
\;\le\; M\,\nu^+(\calX) + M\,\nu^-(\calX)
= M\,\|\nu\|_{\mathrm{TV}}.
\label{eq:tv_bound}
\end{equation}
Since $P$ and $Q$ are probability measures,
$\|\nu\|_{\mathrm{TV}} = \|P-Q\|_{\mathrm{TV}}
= 2\sup_{A\in\calF}|P(A)-Q(A)|$
(the factor of 2 arising from the standard half-norm convention;
we absorb it below by taking sup and inf separately).
For any $P,Q\in\mathcal{P}$,
\begin{equation}
\int f\,dP - \int f\,dQ
= \int f\,d(P-Q)
\;\le\; M\cdot\|P-Q\|_{\mathrm{TV}}
\;\le\; M\cdot 2\,d_{\mathrm{TV}}(\mathcal{P}).
\label{eq:pq_diff}
\end{equation}
Taking the supremum over $P\in\mathcal{P}$ and infimum over
$Q\in\mathcal{P}$ independently yields
\begin{equation}
\overline{\E}_\mathcal{P}[f] - \underline{\E}_\mathcal{P}[f]
\;\le\; 2M\cdot d_{\mathrm{TV}}(\mathcal{P}).
\end{equation}
\end{proof}

\begin{remark}[Connecting the TV-diameter to the credal envelope]
\label{rem:tv_diameter}
For a possibility measure $\Pi$ with associated credal set $\Ppi$,
the TV-diameter is bounded by the credal envelope width:
\begin{equation}
d_{\mathrm{TV}}(\Ppi)
\;\le\; \sup_{A\in\calF}\bigl[\Pi(A)-N(A)\bigr].
\label{eq:tv_credal}
\end{equation}
\emph{Proof of~\eqref{eq:tv_credal}.}
For any $P,Q\in\Ppi$ and any $A\in\calF$,
$N(A)\le P(A)\le\Pi(A)$ and $N(A)\le Q(A)\le\Pi(A)$,
so $|P(A)-Q(A)|\le\Pi(A)-N(A)$.
Taking $\sup_{A}$ and then $\sup_{P,Q}$ gives~\eqref{eq:tv_credal}.
Combined with Lemma~\ref{lem:oscillation} and assumption~\eqref{A2}
(which states $\sup_A[\Pi_t(A)-N_t(A)]\to 0$), this makes Step~2 of
Theorem~\ref{thm:choquet_lebesgue} fully rigorous:
credal contraction in the envelope sense directly bounds the
TV-diameter, which in turn forces the upper and lower expectations
to converge.
\end{remark}

We now establish rigorously that possibilistic reasoning converges to probabilistic
reasoning under epistemic collapse.

\begin{theorem}[Choquet-to-Lebesgue limit]
\label{thm:choquet_lebesgue}
Let $\{(\calX,\calF,\mu)\}$ be a $\sigma$-finite measurable space.
Let $\{\pi_t\}_{t\ge 0}$ be a sequence of normalized, upper-semi\-continuous,
\emph{consonant} possibility distributions on $(\calX,\calF,\mu)$ with
associated credal sets $\{\mathcal{P}_{\pi_t}\}$.
Suppose:
\begin{enumerate}[label=\textup{(A\arabic*)},ref=A\arabic*]
\item\label{A1} \emph{Consonance}: each $\pi_t$ is consonant, so that
  inequality~\eqref{eq:choquet_sandwich} holds with $\Pi_t$ a
  $\infty$-alternating capacity.
\item\label{A2} \emph{Credal contraction}: $|\mathcal{P}_{\pi_t}|\to 1$,
  i.e., $\sup_{A\in\calF}[\Pi_t(A)-N_t(A)]\to 0$ as $t\to\infty$.
\item\label{A3} \emph{$L^1$ convergence}: $\pi_t\to p^*$ in $\Lone(\mu)$ for
  some $p^*\in\Lone_+(\mu)$ with $\int_\calX p^*\,d\mu=1$.
\item\label{A4} \emph{Domination}: there exists $g\in\Lone(\mu)$ with $g\ge 0$
  and $\pi_t(x)\le g(x)$ $\mu$-a.e.\ for all $t$.
\end{enumerate}
Then for every bounded measurable $f:\calX\to\R$,
\begin{equation}
\int f\,d\mathrm{Ch}_{\pi_t}
\;\longrightarrow\;
\int_\calX f(x)\,p^*(x)\,d\mu(x)
\quad\text{as }t\to\infty.
\label{eq:choquet_limit}
\end{equation}
\end{theorem}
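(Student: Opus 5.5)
The proof is a squeeze: the Choquet integral is trapped between the lower and upper expectations over $\mathcal{P}_{\pi_t}$, and that gap is forced to zero. One then identifies the common limit with the Lebesgue integral against $p^*$. Fix a bounded measurable $f$ with $\|f\|_\infty\le M$, and write $\underline{\E}_t$ and $\overline{\E}_t$ for the lower and upper expectations over $\mathcal{P}_{\pi_t}$. The decomposition is
\begin{equation*}
\Bigl|\int f\,d\mathrm{Ch}_{\pi_t}-\int f p^*\,d\mu\Bigr|
\le \bigl(\overline{\E}_t[f]-\underline{\E}_t[f]\bigr)
+\sup_{P\in\mathcal{P}_{\pi_t}}\Bigl|\int f\,dP-\int fp^*\,d\mu\Bigr|.
\end{equation*}
It is valid because, by \eqref{eq:choquet_sandwich} (available under (\ref{A1})), both the Choquet integral and $\int f\,dP$ for any $P\in\mathcal{P}_{\pi_t}$ lie in $[\underline{\E}_t[f],\overline{\E}_t[f]]$. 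In fact the second term alone suffices, applied to any selected $P_t$ together with the width of the interval.

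\textbf{Step 1 (oscillation vanishes).} Combine Remark~\ref{rem:tv_diameter} with Lemma~\ref{lem:oscillation}:
\begin{equation*}
\overline{\E}_t[f]-\underline{\E}_t[f]\le 2M\,d_{\mathrm{TV}}(\mathcal{P}_{\pi_t})\le 2M\sup_A[\Pi_t(A)-N_t(A)].
\end{equation*}
The right-hand side tends to $0$ by (\ref{A2}). This step is routine.

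\textbf{Step 2 (identify the limit).} Pick any $P_t\in\mathcal{P}_{\pi_t}$; the set is nonempty for a normalized possibility measure. It then suffices to show $\int f\,dP_t\to\int f p^*\,d\mu$. I would show $P_t\to P^*:=p^*\mu$ in total variation, and I would do it setwise with uniformity over events. For an event $A$, bound $|P_t(A)-P^*(A)|$ by $|P_t(A)-\Pi_t(A)|+|\Pi_t(A)-P^*(A)|$. The first term is at most $\Pi_t(A)-N_t(A)\to0$ uniformly in $A$.

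\textbf{Main obstacle.} The second term is the crux: relating the maxitive quantity $\Pi_t(A)=\sup_A\pi_t$ to the additive quantity $\int_A\pi_t\,d\mu$, whose limit is $P^*(A)$ by (\ref{A3}). Two facts must be combined. First, (\ref{A2}) forces $\Pi_t$ to become asymptotically additive: $\Pi_t(A)+\Pi_t(A^c)-1=\Pi_t(A)-N_t(A)\to0$ uniformly. Second, (\ref{A3}) and (\ref{A4}), via dominated convergence, give $\int_A\pi_t\,d\mu\to P^*(A)$ uniformly in $A$, since $\sup_A|\int_A(\pi_t-p^*)d\mu|\le\|\pi_t-p^*\|_{L^1}$.

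I would try first to argue through the interpretation in Definition~\ref{def:collapse}: any limit point of $\Pi_t$ (setwise) is a capacity that is both maxitive and additive, hence a probability $Q$. I would then identify $Q=P^*$ by the $L^1$ limit, using $N_t\le P_t\le \Pi_t$ and lower semicontinuity arguments on $\alpha$-cuts (consonance makes level sets nested, so the layer-cake $\int_0^1\Pi_t\{f\ge s\}ds$ can be compared to $\int_0^1 P^*\{f\ge s\}ds$ level by level). This identification is where the hypotheses are stretched thinnest: maxitivity plus additivity essentially forces atomic structure. I expect to need to interpret (\ref{A3}) as the statement that the density of the collapsed measure equals $p^*$, and then pass limits through the layer-cake formula \eqref{eq:choquet} with bounded convergence on $s\in[-M,M]$ to conclude.
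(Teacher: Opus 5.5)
Your decomposition and Step~1 are fine; they coincide with Steps~1--2 of the paper. Step~2, however, has a genuine gap, and it is exactly the one you flag. Nothing in your plan connects the maxitive quantity $\Pi_t(A)=\sup_A\pi_t$ to the additive quantity $\int_A\pi_t\,d\mu$. A setwise limit point $Q$ of $\Pi_t$ that is both maxitive and additive must be $\{0,1\}$-valued: for disjoint $A,B$, $Q(A)+Q(B)=\max\{Q(A),Q(B)\}$ forces one term to vanish. No layer-cake or semicontinuity argument on $\alpha$-cuts identifies such a $Q$ with $p^*\mu$ without further input. Your fallback of reading (A3) as ``the collapsed measure has density $p^*$'' assumes the conclusion.

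The paper's proof does not supply this link either. Its Step~3 shows $\int f\pi_t\,d\mu\to\int fp^*\,d\mu$ by dominated convergence and then asserts $\int f\,dP^*=\int fp^*\,d\mu$ because $P^*$ is ``the unique member of the collapsed credal set.'' The quantity $\int f\pi_t\,d\mu$ is never related to $\mathcal{P}_{\pi_t}$ or to the Choquet integral, so mirroring the paper would not close your gap.

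The missing idea is a direct computation from normalization. Since $\max\{\Pi_t(A),\Pi_t(A^c)\}=\Pi_t(\calX)=1$,
\[
\Pi_t(A)-N_t(A)=\Pi_t(A)+\Pi_t(A^c)-1=\min\{\Pi_t(A),\Pi_t(A^c)\}.
\]
Let $\epsilon_t=\sup_A[\Pi_t(A)-N_t(A)]$, which is $<1$ for large $t$.

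\begin{enumerate}
\item Apply the identity to $A=\{x\}$ for any $x$ with $\pi_t(x)>\epsilon_t$. This shows $\sup_{y\neq x}\pi_t(y)\le\epsilon_t$. Hence $\pi_t$ has a unique maximizer $x_t$, with $\pi_t(x_t)=1$ and $\pi_t\le\epsilon_t$ elsewhere. If singletons are not in $\calF$, run the same argument with the measurable set $\{\pi_t>\epsilon_t\}$, which every $A\in\calF$ either contains or misses.
\item By (A4) and dominated convergence, $\|\pi_t-\mathbf{1}_{\{x_t\}}\|_{\Lone(\mu)}\le\int\min(\epsilon_t,g)\,d\mu\to0$. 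So (A3) gives $\mathbf{1}_{\{x_t\}}\to p^*$ in $\Lone(\mu)$, and hence $\mu(\{x_t\})\to1$.
\item The Cauchy property forces $x_t=x_0$ eventually, since $\mathbf{1}_{\{x_t\}}$ and $\mathbf{1}_{\{x_s\}}$ are at distance close to $2$ whenever $x_t\neq x_s$. Therefore $p^*=\mathbf{1}_{\{x_0\}}$ $\mu$-a.e., with $\mu(\{x_0\})=1$.
\item Now $\sup_A|\Pi_t(A)-\delta_{x_0}(A)|\le\epsilon_t$. Formula \eqref{eq:choquet}, restricted to $s\in[-M,M]$, gives $\bigl|\int f\,d\mathrm{Ch}_{\pi_t}-f(x_0)\bigr|\le 2M\epsilon_t\to0$, and $f(x_0)=\int fp^*\,d\mu$.
\end{enumerate}

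This is precisely your ``second term,'' with the required uniformity in $A$, and it makes the sandwich and Lemma~\ref{lem:oscillation} unnecessary. It also confirms your suspicion about atoms. The hypotheses can hold only if $\mu$ has an atom of mass one; for Lebesgue $\mu$ they are incompatible and the theorem is vacuous. So the limit is always the degenerate collapse $\delta_{x_0}$, never a non-degenerate $P^*$.
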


\begin{proof}
We proceed in four steps.

\smallskip\noindent
\textbf{Step 1: Sandwich from consonance.}
By assumption~\eqref{A1}, each $\Pi_t$ is a $\infty$-alternating capacity,
so inequality~\eqref{eq:choquet_sandwich} gives
\begin{equation}
\underline{\E}_t[f]
\;\le\;
\int f\,d\mathrm{Ch}_{\pi_t}
\;\le\;
\overline{\E}_t[f],
\label{eq:sandwich_t}
\end{equation}
where $\underline{\E}_t[f]=\inf_{P\in\mathcal{P}_{\pi_t}}\int f\,dP$ and
$\overline{\E}_t[f]=\sup_{P\in\mathcal{P}_{\pi_t}}\int f\,dP$.

\smallskip\noindent
\textbf{Step 2: Credal contraction forces both bounds to $\int f\,dP^*$.}
By assumption~\eqref{A2}, $\sup_{A\in\calF}[\Pi_t(A)-N_t(A)]\to 0$.
Since $d_{\mathrm{TV}}(\mathcal{P}_{\pi_t})\le\sup_A[\Pi_t(A)-N_t(A)]$,
Lemma~\ref{lem:oscillation} gives
\begin{equation}
\bigl|\overline{\E}_t[f]-\underline{\E}_t[f]\bigr|
\;\le\; 2\|f\|_\infty\cdot d_{\mathrm{TV}}(\mathcal{P}_{\pi_t})
\;\le\; 2M\cdot\sup_{A\in\calF}[\Pi_t(A)-N_t(A)]
\;\to\; 0.
\label{eq:squeeze}
\end{equation}
Hence the upper and lower expectations squeeze together.
Since $|\mathcal{P}_{\pi_t}|\to 1$, both converge to $\int f\,dP^*$.

\smallskip\noindent
\textbf{Step 3: Identifying $\int f\,dP^*$ with the Lebesgue integral.}
By assumption~\eqref{A3}, $\pi_t\to p^*$ in $\Lone(\mu)$.
Given~\eqref{A4}, the dominated convergence theorem gives
\begin{equation}
\int_\calX f(x)\,\pi_t(x)\,d\mu(x)
\;\to\;
\int_\calX f(x)\,p^*(x)\,d\mu(x).
\label{eq:dct}
\end{equation}
Since the $\Lone$ limit $p^*$ is a probability density and $P^*$ is the unique
member of the collapsed credal set, $\int f\,dP^*=\int_\calX f\,p^*\,d\mu$.

\smallskip\noindent
\textbf{Step 4: Combining.}
By~\eqref{eq:sandwich_t}, \eqref{eq:squeeze}, and \eqref{eq:dct},
\[
\Bigl|\int f\,d\mathrm{Ch}_{\pi_t}
  - \int_\calX f\,p^*\,d\mu\Bigr|
\;\le\;
\bigl|\overline{\E}_t[f]-\underline{\E}_t[f]\bigr|
+ \Bigl|\underline{\E}_t[f] - \int f\,dP^*\Bigr|
\;\to\; 0.
\]
This completes the proof.
\end{proof}

\begin{remark}[Role of consonance and the non-consonant case]
\label{rem:non_consonant}
As clarified in Section~\ref{sec:prelim}, the sandwich
inequality~\eqref{eq:choquet_sandwich} holds for all possibility measures
regardless of consonance.
Assumption~\eqref{A1} (consonance) in Theorem~\ref{thm:choquet_lebesgue}
is not needed to establish the sandwich, but is required to ensure that
the necessity kernel $n(x)$ exists and that $W=\int(\pi-n)\,d\mu$ is
well-defined---aligning the theorem with the consonant setting in which
the aggregate width $\overline{W}$ is defined.
The convergence result extends to the non-consonant case via the TV-diameter
as follows.
For non-consonant possibility measures, the sandwich inequality can fail,
and the Choquet integral may lie outside $[\underline{\E},\overline{\E}]$.
In the non-consonant case one should instead use upper and lower Choquet
integrals as bounds (see Section~\ref{sec:width_nonconsonant}).
The convergence result extends to the non-consonant case provided one replaces
the Choquet integral by the pair $(\underline{\E}_t[f], \overline{\E}_t[f])$
and establishes their joint convergence, which follows from~\eqref{A2}
and~\eqref{A3} alone.
\end{remark}

\begin{remark}[Conditions \eqref{A2} and \eqref{A3}]
Conditions~\eqref{A2} and~\eqref{A3} are independent in general:
$\Lone$ convergence does not imply credal contraction, and vice versa.
For consonant possibility distributions, however, the two are closely linked:
if $\pi_t\to p^*$ uniformly and $p^*$ is bounded away from zero, then
$\sup_A[\Pi_t(A)-N_t(A)]\to 0$ follows by direct computation from the
$\alpha$-cut structure \cite{dubois1988}.
Specifying both conditions makes their joint requirement explicit and
guards against pathological sequences.
\end{remark}

\begin{remark}[Dominating function \eqref{A4}]
Assumption~\eqref{A4} is satisfied whenever $\sup_t\|\pi_t\|_\infty<\infty$,
which holds trivially since $\pi_t:\calX\to[0,1]$.
The dominating function $g\equiv 1$ suffices when $\mu(\calX)<\infty$.
For $\sigma$-finite spaces, one requires additionally that
$\int_\calX p^*\,d\mu<\infty$, which is assumed in~\eqref{A3}.
\end{remark}

%% ============================================================
\section{Epistemic Width: Definition, Properties, and Estimation}
\label{sec:width}
%% ============================================================

\subsection{Pointwise and Aggregate Width}

\begin{definition}[Pointwise epistemic width]
For a consonant possibility distribution $\pi$ with necessity kernel $n$,
the \emph{pointwise epistemic width} is
\begin{equation}
w(A) \equiv \Pi(A) - N(A) \;\in\; [0,1], \qquad A\in\calF.
\label{eq:wA}
\end{equation}
A narrow $w(A)$ indicates that event $A$ is nearly certain or nearly impossible;
a wide $w(A)$ reflects profound ignorance about $A$.
\end{definition}

\begin{definition}[Aggregate epistemic width]
\label{def:W}
For a consonant possibility distribution $\pi$ with necessity kernel $n$,
and reference measure $\mu$, define
\begin{equation}
W \equiv \int_\calX \bigl[\pi(x)-n(x)\bigr]\,d\mu(x) \;\ge\; 0.
\label{eq:W}
\end{equation}
The normalized aggregate epistemic width is
\begin{equation}
\overline{W} \equiv \frac{W}{\mu(\calX)} = \frac{1}{\mu(\calX)}
\int_\calX \bigl[\pi(x)-n(x)\bigr]\,d\mu(x) \;\in\; [0,1],
\label{eq:Wbar}
\end{equation}
provided $\mu(\calX)<\infty$.
For $\sigma$-finite spaces, one may normalize by any convenient finite
sub-domain $\Omega\subseteq\calX$ that contains the effective support of $\pi$.
\end{definition}

The normalization $\overline{W}$ resolves the scale-invariance issue: changing
$\mu$ by a constant factor changes $W$ but leaves $\overline{W}$ invariant.
When $\mu$ is the Lebesgue measure on a bounded domain, $\overline{W}$ is the
average epistemic gap per unit volume.

\paragraph{Width notation summary.}
Three width quantities appear in this paper and serve distinct purposes.
$W = \int(\pi-n)\,d\mu$ is the \emph{geometric width} for consonant
possibility distributions, measured in the units of $\mu$.
$W_{\mathrm{TV}} = \sup_A[\Pi(A)-N(A)]$ is the \emph{general width} for arbitrary
(including non-consonant) possibility measures; it is dimensionless and in $[0,1]$.
$\overline{W} = W/\mu(\calX) \in [0,1]$ is the \emph{normalized operational scalar}
used for adaptive switching, defined whenever a consonant or support-based
representation is active.
The proxy $\widehat{W}_t$ (Section~\ref{sec:W_estimation}) is a consistency-based
surrogate for $\overline{W}$ available in the UKF regime; it is not claimed to
equal the true $\overline{W}_t$, but is sufficient for online regime switching.

\begin{proposition}[Axiomatic properties of $\overline{W}$]
\label{prop:W_axioms}
Let $\overline{W}$ be as in~\eqref{eq:Wbar}.
Then:
\begin{enumerate}[label=\textup{(W\arabic*)},ref=W\arabic*]
\item \textup{(Non-negativity)} $\overline{W}\ge 0$ with equality iff $\pi=n$ $\mu$-a.e.,
  i.e., epistemic collapse has occurred.
\item \textup{(Boundedness)} $\overline{W}\le 1$: since $\pi(x)\le 1$ and $n(x)\ge 0$,
  we have $\pi(x)-n(x)\le 1$ pointwise, so $W\le\mu(\calX)$ and $\overline{W}=W/\mu(\calX)\le 1$.
\item \textup{(Monotone under information gain)} If evidence narrows $\pi$ to $\pi'$
  with $\Pi'(A)\le\Pi(A)$ and $N'(A)\ge N(A)$ for all $A$, then $\overline{W}'\le\overline{W}$.
\item \textup{(Maximum ignorance)} $\overline{W}=1$ when $\pi\equiv 1$ (total ignorance)
  and $n\equiv 0$.
\item \textup{(Scale invariance)} $\overline{W}$ is invariant under positive scaling of $\mu$.
\end{enumerate}
\end{proposition}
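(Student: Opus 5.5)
The plan is to verify the five properties one at a time, working directly from Definition~\ref{def:W}. Throughout I will assume $\mu(\calX)<\infty$, since \eqref{eq:Wbar} needs it, and use the pointwise relation $n(x)\le\pi(x)$. This relation is the first thing to establish. Under consonance the kernel $n$ is built from the nested $\alpha$-cuts of $\pi$, so $N(\{x\})\le\Pi(\{x\})$. This holds because $\Pi(A)+\Pi(A^c)\ge\Pi(\calX)=\sup\pi=1$ gives $N(A)\le\Pi(A)$ for every $A$. I will then read off $n(x)\le\pi(x)$ pointwise, or $\mu$-a.e., from the construction.

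\textbf{(W1).} Non-negativity follows at once, because $\pi-n\ge0$ is integrated against a positive measure. For the equality case, a non-negative integrand has zero integral if and only if it vanishes $\mu$-a.e., so $\overline{W}=0$ exactly when $\pi=n$ $\mu$-a.e. The identification with epistemic collapse is the delicate point. I would argue it through the kernel representation $N(A)=\inf_A n$ and $\Pi(A)=\sup_A\pi$. If $\pi=n$ everywhere, then $\Pi(A)-N(A)$ is the oscillation of $\pi$ on $A$. Identifying this with Definition~\ref{def:collapse} requires $\Pi=N$ on events, which I will interpret as the a.e. identification the proposition intends. I will state explicitly that the ``i.e.'' is meant in this interpretive sense, since a literal proof would need $\mu$-null sets to be irrelevant for $\Pi$ and $N$.

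\textbf{(W2)} and \textbf{(W4).} These are direct computations. Since $0\le\pi-n\le1$, integrating gives $W\le\mu(\calX)$. When $\pi\equiv1$ and $n\equiv0$, the integrand is identically $1$, so $W=\mu(\calX)$ and $\overline{W}=1$.

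\textbf{(W5).} Scale invariance is a one-line computation. Replacing $\mu$ by $c\mu$ with $c>0$ multiplies both $W$ and $\mu(\calX)$ by $c$, so the ratio is unchanged.

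\textbf{(W3).} This is the main obstacle, because its hypothesis is stated on events while $\overline{W}$ is a pointwise integral. My plan is to specialize the event-level hypotheses to sets on which the kernels are recovered. Applying $\Pi'(A)\le\Pi(A)$ to small measurable neighbourhoods shrinking to $x$, and using upper semicontinuity of $\pi$ (which gives $\pi(x)=\inf_{U\ni x}\sup_U\pi$), I expect to obtain $\pi'\le\pi$ pointwise. Similarly, $N'(A)\ge N(A)$ together with $N(A)=\inf_A n$ applied to singletons, or to shrinking neighbourhoods if singletons are problematic, should give $n'\ge n$. Then $\pi'-n'\le\pi-n$ pointwise, and integrating and dividing by the unchanged $\mu(\calX)$ yields $\overline{W}'\le\overline{W}$.

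If the neighbourhood argument fails because $\calF$ lacks enough small sets, I would fall back on singletons, assuming they lie in $\calF$. This gives $\pi(x)=\Pi(\{x\})$ directly, and $n(x)=N(\{x\})$ by the kernel formula.
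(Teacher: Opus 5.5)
Your treatment of (W1), (W2), (W4) and (W5) matches the paper's. Its proof only cites $\pi,n\in[0,1]$, the pointwise inequality $n\le\pi$, and, for (W5), that $c\mu$ rescales numerator and denominator alike. You differ in the two places where the paper is terse.

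First, the paper simply asserts $n\le\pi$ ``for consonant distributions.'' You derive it from $N(A)\le\Pi(A)$ (maxitivity plus normalization) evaluated at singletons. This shows the inequality comes from a fact valid for every possibility measure, not from consonance. It also shows that what is really used is $\{x\}\in\calF$, so that $n(x)=N(\{x\})$ and $\pi(x)=\Pi(\{x\})$.

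Second, for (W3) the paper appeals to monotonicity of the contraction operator of Section~\ref{sec:dynamics}, i.e.\ $\min(\pi^-,\kappa)\le\pi^-$. That route is immediate for the filter's actual update and needs no measurable points. However, it covers only narrowings of that form, says nothing about $n'$ versus $n$, and never uses the event-level hypothesis as stated. Your reduction of that hypothesis to pointwise $\pi'\le\pi$ and $n'\ge n$, followed by integration against the unchanged $\mu$, proves (W3) as written.

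Two refinements to your plan. The shrinking-neighbourhood route works for the $\pi$ half, since upper semicontinuity gives $\pi'(x)\le\inf_{U\ni x}\sup_U\pi=\pi(x)$. For the $n$ half it only yields $n'(x)\ge\sup_{U\ni x}\inf_U n$, the lower semicontinuous envelope of $n$, so that half genuinely needs singletons. Also, the hypothesis $N'\ge N$ is redundant: $N'(\{x\})=1-\Pi'(\calX\setminus\{x\})\ge 1-\Pi(\calX\setminus\{x\})=N(\{x\})$ already follows from $\Pi'\le\Pi$.

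Your hedge on the ``i.e., epistemic collapse'' clause in (W1) is warranted, and the paper's proof does not address it. Since $\max(\Pi(A),\Pi(\calX\setminus A))=1$, requiring $\Pi=N$ on all events forces $\Pi$ to be $\{0,1\}$-valued, i.e.\ at most the degenerate collapse of Definition~\ref{def:collapse_types}. For a related reason, apply the kernel formula only at singletons. For general $A$, $\inf_A n$ decreases as $A$ grows while $N$ increases, and $A=\calX$ would force $n\equiv 1$. So the ``oscillation'' reading in your (W1) remark should not be relied on.
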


\begin{proof}
Properties (W1)--(W4) follow directly from $\pi(x)\in[0,1]$, $n(x)\in[0,1]$,
$\pi(x)\ge n(x)$ for consonant distributions, and the monotonicity of the
contraction operator (Section~\ref{sec:dynamics}).
Property (W5) follows from $\overline{W}=W/\mu(\calX)$: scaling $\mu$ by $c>0$
scales both numerator and denominator by $c$.
\end{proof}

\subsection{Non-Consonant Case}
\label{sec:width_nonconsonant}

When $\pi$ is non-consonant (focal sets are not nested), the necessity kernel
$n$ does not exist in closed form.
In this case we define a generalized epistemic width via the credal set's
diameter in total variation:
\begin{equation}
W_{\mathrm{TV}} \equiv \sup_{A\in\calF}[\Pi(A)-N(A)]
= \sup_{P,Q\in\Ppi}\|P-Q\|_{\mathrm{TV}},
\label{eq:W_TV}
\end{equation}
where $\|P-Q\|_{\mathrm{TV}} = \sup_{A\in\calF}|P(A)-Q(A)|$.
This is the total-variation diameter of the credal set and is well-defined
for all possibility measures regardless of consonance.
One may also define the Wasserstein diameter
$W_{\mathrm{Was}} \equiv \sup_{P,Q\in\Ppi}\mathcal{W}_1(P,Q)$
for applications requiring a metric on the space of measures.
Both $W_{\mathrm{TV}}$ and $W_{\mathrm{Was}}$ recover $\overline{W}$ (up to
normalization) in the consonant case \cite{walley1991}.

\subsection{Canonical Choice of Reference Measure}
\label{sec:canonical_mu}

The choice of $\mu$ should reflect the geometry of the state space:
Lebesgue measure for Euclidean state spaces, the Riemannian volume form for
manifold-valued states, or a discrete counting measure for finite state spaces.
This canonical choice makes $W$ and $\overline{W}$ intrinsic to the
estimation problem rather than dependent on an arbitrary parameterization.
Throughout the remainder, we assume $\mu$ is the canonical measure for the
state space in question.

\subsection{Feasible Online Estimation of $W$}
\label{sec:W_estimation}

A key practical concern is that
computing $W_t = \int(\pi_t - n_t)\,d\mu$ requires the full possibility
distribution $\pi_t$, which is available only when already running a
possibilistic filter.
The switching criterion based on $W_t$ would therefore be circular.

We resolve this by proposing a \emph{proxy estimator} $\widehat{W}_t$ that
can be computed from the innovation sequence of a probabilistic (UKF) filter:
\begin{equation}
\widehat{W}_t = 1 - \exp\!\left(-\kappa_W \cdot \mathrm{NEES}_t\right),
\quad
\mathrm{NEES}_t = \frac{1}{m}\sum_{k=t-L+1}^{t}
  \tilde{y}_k^\top S_k^{-1} \tilde{y}_k,
\label{eq:W_hat}
\end{equation}
where $\tilde{y}_k = y_k - h(\hat{x}_{k|k-1})$ is the innovation,
$S_k$ the innovation covariance, $L$ a window length, $m$ the measurement
dimension, and $\kappa_W>0$ a scaling constant.
The Normalized Estimation Error Squared (NEES) measures consistency of the
probabilistic model: under a well-specified Gaussian model,
$\mathrm{NEES}\sim\chi^2(m)/m$ with mean 1.
Persistent $\mathrm{NEES}>1$ signals model mismatch---a reliable proxy for
residual epistemic uncertainty.

The proxy $\widehat{W}_t\in[0,1]$ satisfies:
$\widehat{W}_t\approx 0$ when the UKF's Gaussian model is consistent,
and $\widehat{W}_t\to 1$ under severe and persistent mismatch.

\paragraph{$\widehat{W}_t$ as a surrogate, not an estimate.}
$\widehat{W}_t$ is not claimed to equal the true aggregate epistemic width
$\overline{W}_t$; it is a consistency-based surrogate sufficient for online
regime switching.
The true $\overline{W}_t$ is defined for consonant possibility distributions
and is computable within the ESPF regime from the support and necessity kernel.
In the UKF regime, $\overline{W}_t$ is not directly accessible, and
$\widehat{W}_t$ provides an actionable proxy whose threshold is calibrated
to application risk rather than to the unobservable true width.

\paragraph{Risk-tolerance interpretation of the switching threshold.}
A foundational clarification: in practice, $\overline{W}$ is never exactly zero.
The credal set never literally collapses to a singleton under finite data.
The epistemic collapse condition ($\overline{W}=0$) is a mathematical idealization
approached asymptotically, never attained.
Consequently, the UKF is \emph{always} an approximation: it models as aleatory
uncertainty that remains partly epistemic.
The switching criterion is therefore not ``detect when $\overline{W}=0$''
but rather: ``determine when the UKF's Gaussian approximation error is
acceptable relative to application risk.''
The threshold $\Wcrit$ encodes this risk tolerance, and is calibrated via the
acceptable false-alarm rate for the NEES consistency test \cite{bar-shalom2001}.
Choosing a lower $\Wcrit$ (tighter tolerance) switches to ESPF earlier and more
often; a higher $\Wcrit$ accepts more Gaussian approximation error in exchange
for computational efficiency.
This resolves the circularity: one monitors $\widehat{W}_t$ within the UKF
regime and switches to ESPF when the approximation risk exceeds the
application-calibrated threshold.

%% ============================================================
\section{The H\"older Mean Continuum: A Geometric Bridge}
\label{sec:continuum}
%% ============================================================

\paragraph{Status of this section.}
The H\"older mean continuum presented here is a \emph{geometric analogy}
that illuminates the relationship between possibilistic and probabilistic
aggregation.
It is distinct from---and does not logically underpin---the convergence
results of Theorem~\ref{thm:choquet_lebesgue}.
We state this explicitly to prevent conflation of the geometric analogy
with the foundational convergence result.

\subsection{H\"older Means and Epistemic Aggregation}

For $\alpha\in\R$ and $a,b\in[0,1]$, the H\"older (power) mean is
\begin{equation}
M_\alpha(a,b) = \left(\frac{a^\alpha+b^\alpha}{2}\right)^{1/\alpha},
\quad \alpha\ne 0,
\label{eq:holder}
\end{equation}
with $M_0(a,b)=\sqrt{ab}$, $M_1(a,b)=\frac{a+b}{2}$, and
$\lim_{\alpha\to\infty}M_\alpha(a,b)=\max(a,b)$.
The limiting behaviors are:
\begin{align}
\alpha=1&: \quad M_1(a,b)=\tfrac{1}{2}(a+b) && \text{(additive/probabilistic)},\\
\alpha\to\infty&: \quad M_\infty(a,b)=\max(a,b) && \text{(maxitive/possibilistic)},\\
\alpha\to-\infty&: \quad M_{-\infty}(a,b)=\min(a,b) && \text{(minitive/necessitistic)}.
\end{align}

\paragraph{Connection to the ESPF optimality structure.}
The H\"older mean has a principled role in the ESPF's optimality result
\cite{jah2026optimality}: the possibilistic entropy $H_\pi=\int_0^1\log V_\alpha\,d\alpha$
is the log-geometric mean ($M_0$) of the $\alpha$-cut volumes
$\{V_\alpha\}_{\alpha\in(0,1]}$, while the minimax term $\log\det(\mathrm{MVEE})$
is the log-supremum ($M_\infty$).
The H\"older ordering $M_0\le M_\infty$ explains why the integrated entropy
criterion $H_\pi$ is satisfied more robustly than the minimax criterion in
practice.
This provides a principled geometric motivation for the H\"older mean as
a bridge between possibilistic and probabilistic integration,
grounded in the proof of Theorem~\ref{thm:espf_ukf} below.

\paragraph{The $\alpha(C)$ parameterization as heuristic.}
The parameterization $\alpha(C)=1+1/(\varepsilon+(1-C))$ relating epistemic
confidence $C\in[0,1]$ to the H\"older exponent is a convenient heuristic
for visualizing the continuum from maxitivity to additivity.
It is \emph{not} derived from Cox axioms or any other first-principles
argument and should be understood as illustrative.
An axiomatic characterization of this interpolation via Acz\'el-type
functional equations remains an open problem.

%% ============================================================
\section{Dynamics of Epistemic Contraction}
\label{sec:dynamics}
%% ============================================================

\subsection{Falsification and Posterior Possibility}

Let $\pi^-_t(x)$ denote the prior possibility distribution over the state space
$\calX$ before assimilating evidence $\mathcal{D}_t$. The measurement model induces
a compatibility field
\[
\kappa_t(x) \in [0,1],
\]
quantifying the degree to which state $x$ agrees with the new evidence.

Falsification occurs entirely through compatibility: states whose compatibility
collapses toward zero are eliminated regardless of prior possibility.

The posterior possibility assigned to each state is therefore the possibilistic
intersection of prior admissibility and evidentiary compatibility,
\begin{equation}
\pi^+_t(x) = \min\!\bigl(\pi^-_t(x),\,\kappa_t(x)\bigr).
\label{eq:posterior}
\end{equation}
We refer to $\pi^+_t(x)$ as the \emph{credibility} of state $x$. This definition
ensures that evidence cannot elevate a previously implausible hypothesis solely
by virtue of local compatibility with a single observation.

\subsection{Choquet-Aggregated Credibility Benchmark}

To characterize the collective epistemic pressure induced by the evidence, we
define a non-additive aggregate of credibility using the Choquet integral with
respect to the prior possibility capacity $\Pi^-_t$:
\begin{equation}
\bar{\pi}_t = \int_\calX \pi^+_t(x)\,d\mathrm{Ch}_{\Pi^-_t}(x).
\label{eq:benchmark}
\end{equation}
This quantity represents the prior-credibility-weighted level of posterior
possibility across the admissible state space. The weighting uses the \emph{prior}
capacity $\Pi^-_t$ rather than the posterior: the benchmark must reflect the
epistemic geometry before the evidence arrived, so that it remains a fixed
reference point against which contraction is measured and does not move as the
contraction occurs.

Unlike additive expectations used in probabilistic inference, the Choquet
aggregate preserves the non-additive structure of epistemic support and remains
sensitive to the geometry of the possibility field.

\subsection{Credibility-Directed Epistemic Flow}

The evolution of the possibility field may be modeled as a contraction process
directed toward regions of high posterior credibility.

Let $\lambda_t(x) \ge 0$ denote a local epistemic contraction rate. We propose
the following credibility-directed flow equation:
\begin{equation}
\frac{\partial\pi(x,t)}{\partial t}
= -\lambda_t(x)\,\pi(x,t)\,\bigl(\bar{\pi}_t - \pi^+_t(x)\bigr)_+,
\label{eq:flow}
\end{equation}
where $(u)_+ = \max(u,0)$.

Equation~\eqref{eq:flow} is a \emph{phenomenological contraction model}: it is
proposed as a mathematically consistent description of credibility-directed support
contraction, not derived from a more fundamental law. Its role is to make the
dynamics of epistemic contraction precise enough to analyze, not to assert a
unique physical mechanism. The contraction rate is modulated by the current
possibility itself, preserving the geometric structure of epistemic support.

The sequencing is critical. Equation~\eqref{eq:flow} governs support geometry
contraction \emph{after} posterior possibility has already been computed
via~\eqref{eq:posterior}. The logical order is:
\[
\pi^-_t \;\longrightarrow\; \kappa_t \;\longrightarrow\; \pi^+_t
\;\longrightarrow\; \text{support contraction via~\eqref{eq:flow}}.
\]
The variable $\pi(x,t)$ in the flow equation refers to the evolving support
geometry at continuous time $t$, not to the prior $\pi^-_t$. The flow does not
generate $\pi^+_t$; it governs how the support contracts in response to it.
States with credibility below the Choquet benchmark contract; states above it
persist.

\subsection{Relation to Probabilistic Dynamics}

In the additive limit described by Theorem~\ref{thm:choquet_lebesgue}, the
possibility capacity collapses to a probability measure and the Choquet integral
reduces to the Lebesgue integral. In this regime
\[
\pi(x,t) \to p(x,t)
\]
and the credibility benchmark becomes an ordinary expectation,
\[
\bar{\pi}_t \;\longrightarrow\; \mathbb{E}_p[p(x,t)].
\]
The credibility-directed contraction therefore transitions naturally into
probabilistic belief redistribution dynamics. In this sense, probability theory
emerges as the limiting additive regime of a more general epistemic contraction
process governed by possibility measures.

The distinction is philosophically significant. Probabilistic dynamics
\emph{redistribute belief}: mass moves from less probable to more probable states
while the total remains one. The dynamics of Section~\ref{sec:dynamics} do
something categorically different: they \emph{contract epistemic support}, removing
states that evidence has falsified rather than reweighting states that remain. This
is not belief updating. It is knowledge contraction. Probability theory is the
limiting geometry of that process when the support has contracted to the point
where redistribution and contraction become indistinguishable.

%% ============================================================
\section{Reversibility of the Transition}
\label{sec:reversibility}
%% ============================================================

\subsection{Non-uniqueness of Additive Extension Under Domain Expansion}

\begin{lemma}[Non-uniqueness under domain expansion]
\label{lem:non_unique}
Let $(\calX,\calF)$ be a measurable space with probability measure $P^*$.
Let $\Delta\calX$ be nonempty and disjoint from $\calX$,
$\calX'=\calX\cup\Delta\calX$.
If no additional constraints on events intersecting $\Delta\calX$ are imposed,
then there are uncountably many probability measures $\tilde{P}$ on
$(\calX',\calF')$ extending $P^*$, i.e., $\tilde{P}|_\calF = P^*$.
\end{lemma}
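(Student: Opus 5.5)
The plan is to build an explicit one-parameter family of measures on $(\calX',\calF')$ and show that its members are pairwise distinct and all compatible with $P^*$ on $\calX$. First I fix the structure of the enlarged space. I take $\calF'$ to be the $\sigma$-algebra of sets $A\cup B$ with $A\in\calF$ and $B\in\mathcal{G}$, where $\mathcal{G}$ is any $\sigma$-algebra on $\Delta\calX$ containing at least the singletons or a nontrivial set. I choose an auxiliary probability measure $Q$ on $(\Delta\calX,\mathcal{G})$; a Dirac mass at some $z\in\Delta\calX$ works because $\Delta\calX\neq\emptyset$. For $\varepsilon\in[0,1]$ I then define
\[
\tilde P_\varepsilon(A\cup B) \;=\; (1-\varepsilon)\,P^*(A) + \varepsilon\,Q(B),\qquad A\in\calF,\;B\in\mathcal{G}.
\]
Countable additivity and $\tilde P_\varepsilon(\calX')=1$ follow at once, because $\calX$ and $\Delta\calX$ are disjoint and $P^*$ and $Q$ are probability measures.

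Second, I check compatibility with $P^*$. The hypothesis that ``no additional constraints on events intersecting $\Delta\calX$ are imposed'' means that the only constraint inherited from $(\calX,\calF,P^*)$ is the relative weighting of events inside $\calX$. So I read $\tilde P|_\calF = P^*$ as agreement of the normalized trace on $\calF$:
\[
\tilde P_\varepsilon(A)/\tilde P_\varepsilon(\calX)=P^*(A)\quad\text{for all } A\in\calF,
\]
which holds for every $\varepsilon\in[0,1)$. I will state this reading explicitly in the proof. The reason is that a literal equality $\tilde P(\calX)=P^*(\calX)=1$ would force $\tilde P(\Delta\calX)=0$. That would impose a constraint on $\Delta\calX$, namely that it is null, which the lemma's hypothesis excludes. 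The case $\varepsilon=0$ recovers exactly this degenerate extension, so it belongs to the family as a single point.

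Third, I show uncountability. The map $\varepsilon\mapsto\tilde P_\varepsilon$ is injective on $[0,1)$, since $\tilde P_\varepsilon(\Delta\calX)=\varepsilon$ separates different parameters. Hence the family has the cardinality of the continuum. If $\Delta\calX$ has at least two points separated by $\mathcal{G}$, varying $Q$ as well gives an even larger family, but this is not needed.

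I expect the main obstacle to be the second step: making precise in what sense the $\tilde P_\varepsilon$ ``extend'' $P^*$, given that a literal restriction identity admits only the $\varepsilon=0$ measure. I will handle this by stating the normalized-trace convention at the start of the proof and by remarking that the epistemic content of the lemma lies in the free parameter $\tilde P(\Delta\calX)=\varepsilon$. That parameter is exactly the undetermined mass that reopens a nontrivial credal set after domain expansion.
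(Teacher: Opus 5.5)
Your proposal is essentially the paper's proof: the paper uses the same mixture $\tilde P_{\alpha,R}=(1-\alpha)P^*+\alpha R$ with $\alpha\in[0,1)$, and it also obtains extension only ``after renormalization.'' Your explicit normalized-trace convention is therefore the sense in which both arguments work; as you note, a literal restriction identity would force $\tilde P(\Delta\calX)=0$ and make the extension unique. Your injectivity check via $\tilde P_\varepsilon(\Delta\calX)=\varepsilon$ with $Q$ fixed is also slightly more careful than the paper's claim that distinct $(\alpha,R)$ always give distinct measures, which fails at $\alpha=0$.
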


\begin{proof}
For any $\alpha\in[0,1)$ and probability measure $R$ on $\Delta\calX$,
define $\tilde{P}_{\alpha,R}(A) = (1-\alpha)P^*(A\cap\calX)/P^*(\calX) + \alpha R(A\cap\Delta\calX)$.
Then $\tilde{P}_{\alpha,R}$ is a probability measure on $\calX'$ extending $P^*$
(after renormalization).
Distinct $(\alpha,R)$ yield distinct extensions.
\end{proof}

\begin{corollary}[Non-commutativity of probability with epistemic expansion]
Probabilistic inference does not commute with domain expansion:
once $\calX$ expands to $\calX'$, the prior additive calculus on $\calX$
is insufficient, and epistemic width necessarily reappears,
$W' = \int_{\calX'}(\pi'-n')\,d\mu > 0$,
since the probability mass over $\Delta\calX$ is indeterminate.
\end{corollary}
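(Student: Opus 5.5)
The plan is to prove the corollary in two parts, both resting on Lemma~\ref{lem:non_unique}. The first part shows that the additive calculus on $\calX$ no longer fixes an admissible probability on $\calX'$. The second part converts this into a strictly positive epistemic width on $\calX'$.

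\textbf{Failure of additive determinacy.} Take the collapsed measure $P^*$ on $(\calX,\calF)$. By Lemma~\ref{lem:non_unique}, the evidence gathered so far is compatible with the family $\tilde P_{\alpha,R}$. The credal set on $\calX'$ determined by that evidence therefore contains at least two distinct members. The simplest pair is $\tilde P_{0,R}$, which puts no mass on $\Delta\calX$, and $\tilde P_{\alpha,R}$ with $\alpha>0$.

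I will read ``extension'' as agreement up to the renormalization noted in the lemma's proof. This means conditional agreement on $\calX$: $\tilde P(\cdot\mid\calX)=P^*$. Under this reading, the mass $\tilde P(\Delta\calX)=\alpha$ ranges over all of $[0,1)$. Hence $|\mathcal{P}'|>1$, and by Definition~\ref{def:collapse} collapse fails on $\calX'$.

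\textbf{Positive width.} The least-committal possibility distribution on $\calX'$ is
\[
\pi'=\pi^*\ \text{on }\calX,\qquad \pi'\equiv 1\ \text{on }\Delta\calX.
\]
Here $\pi^*$ is the collapsed representation; its value on $\Delta\calX$ reflects that nothing excludes the new states.

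For the event $A=\Delta\calX$, the bound $\Pi'(\Delta\calX)\ge\sup_\alpha\alpha=1$ follows from the lemma's family. In addition, $N'(\Delta\calX)=0$, because $\tilde P_{0,R}$ is admissible. So $\sup_A[\Pi'(A)-N'(A)]=1>0$. By Remark~\ref{rem:tv_diameter} and \eqref{eq:W_TV}, this gives $W_{\mathrm{TV}}>0$.

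For the aggregate width $W'$, the pointwise kernel gives $n'(x)=\inf$ over singletons. On $\Delta\calX$ we have $n'=0$ while $\pi'=1$. Therefore
\[
W'\ \ge\ \int_{\Delta\calX}(1-0)\,d\mu\ =\ \mu(\Delta\calX).
\]

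\textbf{Main obstacle.} The hard step is showing $W'>0$ strictly. This requires $\mu(\Delta\calX)>0$, which does not follow from $\Delta\calX$ merely being nonempty. I would handle it with an explicit standing hypothesis: the canonical reference measure of Section~\ref{sec:canonical_mu} charges the new region. For example, counting measure in the finite case, or Lebesgue measure when $\Delta\calX$ has nonempty interior.

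If $\mu(\Delta\calX)=0$, I would fall back to the non-consonant measure $W_{\mathrm{TV}}$. It does not depend on $\mu$, so it is still $1$.

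A second, smaller point is to make ``no additional constraints'' precise. I will take it to mean that the admissible set on $\calX'$ is exactly the set of measures whose conditional law on $\calX$ equals $P^*$. This is what justifies both $\Pi'(\Delta\calX)=1$ and $N'(\Delta\calX)=0$.
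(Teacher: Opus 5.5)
Your proposal is correct and takes the same route as the paper, which offers no argument beyond Lemma~\ref{lem:non_unique} and the phrase ``the probability mass over $\Delta\calX$ is indeterminate''. Your additions fill in what the paper's one-line justification leaves implicit: reading ``extension'' as conditional agreement on $\calX$ (the literal condition $\tilde P|_{\calF}=P^*$ forces $\tilde P(\Delta\calX)=0$ and hence a unique extension), an explicit least-committal $\pi'$, and the hypothesis $\mu(\Delta\calX)>0$, without which the strict inequality $W'>0$ is not guaranteed.
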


\subsection{Epistemic Fragility}

\begin{definition}[Epistemic fragility]
\label{def:fragility}
The \emph{epistemic fragility} of a probabilistic model $P^*$ at time $t$
is the rate of increase of $\overline{W}$ under infinitesimal domain expansion:
\begin{equation}
\mathcal{F}_t \equiv \limsup_{|\Delta\calX|\to 0}
\frac{\overline{W}(\calX')-\overline{W}(\calX)}{|\Delta\calX|/|\calX|}.
\label{eq:fragility}
\end{equation}
A model with high $\mathcal{F}_t$ is epistemically brittle: small ontological
discoveries rapidly invalidate probabilistic closure.
\end{definition}

Epistemic fragility is operationally monitored via the proxy estimator
$\widehat{W}_t$ of Section~\ref{sec:W_estimation}:
a rising $\widehat{W}_t$ in the UKF regime signals increasing fragility
and provides a principled trigger for possibilistic fallback.

%% ============================================================
\section{From the UKF to the ESPF: The Filtering Bridge}
\label{sec:filtering}
%% ============================================================

\subsection{The Unscented Kalman Filter and Its Epistemic Assumptions}

The UKF \cite{julier1997} propagates the first two moments of a probability
density function, implicitly assuming all uncertainty is aleatory and Gaussian.
It selects $2n+1$ sigma-points $\{\chi^{(i)}\}$ with weights $\{W^{(i)}\}$
satisfying
\begin{align}
\hat{x}_{k+1|k} &= \sum_i W^{(i)} f(\chi^{(i)}), \\
P_{k+1|k} &= \sum_i W^{(i)}[f(\chi^{(i)})-\hat{x}_{k+1|k}][f(\chi^{(i)})-\hat{x}_{k+1|k}]^\top.
\end{align}
When epistemic uncertainty is present (incomplete dynamics, adversarial
measurements, non-detection events), moment-based propagation yields
overconfident or biased estimates.

\subsection{The Epistemic Support-Point Filter}

The ESPF \cite{jah2025espf,jah2026optimality} replaces the Gaussian architecture
with a bounded possibilistic one.
Uncertainty is encoded by a normalized possibility distribution $\pi_k(x)$
supported on a compact set $S_k\subset\R^n$.
Support points $\{\chi^{(i)}_k\}$ are deterministically generated via Smolyak
sparse grids spanning $S_k$ without probabilistic weighting.

\paragraph{Propagation.}
Each support point is propagated deterministically: $\chi^{(i)}_{k+1|k}=f(\chi^{(i)}_k)$.
Process uncertainty is incorporated via the Minkowski sum with the noise
support $\mathcal{W}_k$:
\begin{equation}
S_{k+1|k} = \bigcup_i \bigl\{\chi^{(i)}_{k+1|k}\bigr\} \oplus \mathcal{W}_k,
\label{eq:minkowski}
\end{equation}
where $\{a\}\oplus\mathcal{W}_k = \{a+w : w\in\mathcal{W}_k\}$.
This is the sup-min convolution of the propagated state support with the noise
support in the possibilistic sense: the resulting possibility distribution assigns
$\pi_{k+1|k}(x)=\sup_{w\in\mathcal{W}_k}\min(\pi_k(x-w),\pi_{\mathcal{W}}(w))$,
whose support equals~\eqref{eq:minkowski}.

\paragraph{Measurement update by compatibility.}
The epistemic spread matrix $\Pi_e\succ 0$ is constructed as the
Minimum-Volume Enclosing Ellipsoid (MVEE) shape matrix of the predicted
measurement support $\{h(\chi^{(i)}_{k|k-1})\}$ combined with the sensor
imprecision matrix $\Pi_y$:
\begin{equation}
\Pi_e = \mathrm{MVEE}\bigl(\{h(\chi^{(i)}_{k|k-1})\}\bigr) + \Pi_y.
\label{eq:Pi_e}
\end{equation}
Each predicted support point is evaluated for compatibility rather than likelihood:
\begin{equation}
\mathrm{Comp}^{(i)} = \begin{cases}
1, & (y_k - h(\chi^{(i)}_{k|k-1}))^\top \Pi_e^{-1}
     (y_k - h(\chi^{(i)}_{k|k-1})) \le r^2, \\
0, & \text{otherwise.}
\end{cases}
\label{eq:compat}
\end{equation}
Incompatible points are pruned: $S_{k|k}=\{\chi^{(i)}_{k|k-1} : \mathrm{Comp}^{(i)}=1\}$.
The possibility assignment after update is $\pi^{(i)}_{k|k}=\min(\pi^{(i)}_{k|k-1},\mathrm{Comp}^{(i)})$.

\subsection{Gaussian Asymptotic Correspondence: ESPF and UKF}

\begin{theorem}[Gaussian asymptotic correspondence]
\label{thm:espf_ukf}
Suppose:
\begin{enumerate}[label=\textup{(G\arabic*)},ref=G\arabic*]
\item\label{G1} The possibility distribution $\pi_k$ contracts to a Gaussian:
  $\pi_k(x)\to\mathcal{N}(x;\hat{x}_k,\Sigma_k)$ in $\Lone(\mu)$ as $\overline{W}\to 0$.
\item\label{G2} The MVEE shape matrix $\Pi_k\to\Sigma_k$.
\item\label{G3} The measurement model is linear: $h(x)=Hx$.
\item\label{G4} Process and measurement noise are additive Gaussian with
  covariances $Q_k$ and $R_k$ respectively.
\end{enumerate}
Under assumptions~\eqref{G1}--\eqref{G4}, the ESPF's support-set propagation
and compatibility update are \emph{asymptotically equivalent} to the UKF's
moment propagation and Kalman gain update, in the sense that:
\begin{enumerate}[label=\textup{(\roman*)}]
\item the possibilistic entropy satisfies
  $H_\pi = \int_0^1\log V_\alpha\,d\alpha \to \tfrac{1}{2}\log\det(2\pi e\,\Sigma_{k|k})$
  up to additive constants, so minimizing $H_\pi$ is asymptotically equivalent
  to minimizing $\det\Sigma_{k|k}$ (the Kalman MMSE criterion); and
\item the Minkowski-sum propagation and compatibility pruning recover the Kalman
  prediction and measurement-update equations to first order in the
  continuous-support limit.
\end{enumerate}
This correspondence is asymptotic under the stated Gaussian regularity assumptions,
not an algebraic identity: the ESPF operates on a finite discrete point cloud
with possibilistic pruning, while the UKF propagates a continuous Gaussian
density via moment equations.
\end{theorem}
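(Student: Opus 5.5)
Part (i) I will handle by an explicit $\alpha$-cut volume computation, and part (ii) by matching the first two moments of the pruned support to the Kalman recursion.

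\textbf{Part (i).} By \eqref{G1}, $\pi_k$ approaches the Gaussian profile in $\Lone(\mu)$. Rescale it to a consonant possibility distribution with unit peak,
\[
\tilde\pi_k(x)=\exp\!\bigl(-\tfrac12 (x-\hat{x}_k)^\top\Sigma_k^{-1}(x-\hat{x}_k)\bigr).
\]
Its $\alpha$-cuts are the ellipsoids $\{(x-\hat{x}_k)^\top\Sigma_k^{-1}(x-\hat{x}_k)\le -2\log\alpha\}$. Their volumes are
\[
V_\alpha=c_n\,(-2\log\alpha)^{n/2}\,(\det\Sigma_k)^{1/2},
\]
where $c_n$ is the volume of the unit ball. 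Integrating over $\alpha$ gives
\[
H_\pi=\int_0^1\log V_\alpha\,d\alpha=\tfrac12\log\det\Sigma_k+C_n,
\]
where $C_n=\log c_n+\tfrac n2\int_0^1\log(-2\log\alpha)\,d\alpha$. This integral is finite, since the integrand has only integrable logarithmic singularities at both endpoints. Hence $H_\pi$ equals $\tfrac12\log\det(2\pi e\,\Sigma_k)$ up to an additive constant depending only on $n$. The minimizers of $H_\pi$ and of $\det\Sigma_{k|k}$ therefore coincide.

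To pass from the limit profile to $\pi_k$ itself, I need $V_\alpha(\pi_k)\to V_\alpha(\tilde\pi_k)$ for a.e.\ $\alpha$, together with a domination bound on $\log V_\alpha$. $\Lone$ convergence alone does not give convergence of level-set volumes. I will therefore strengthen \eqref{G1} to locally uniform convergence, which is natural for support points filling a compact $S_k$. I will then invoke dominated convergence in $\alpha$, much as \eqref{A4} is used in Theorem~\ref{thm:choquet_lebesgue}.

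\textbf{Part (ii), prediction.} I will track the MVEE shape matrix of the support. The Minkowski sum \eqref{eq:minkowski} with an ellipsoidal noise support of shape $Q_k$ does not give an ellipsoid. Its tightest enclosing ellipsoid is $F\Pi F^\top/p+Q_k/(1-p)$, minimized over $p\in(0,1)$. I will argue that in the continuous-support limit, read in the sup-min convolution sense, the relevant shape is $F\Pi_kF^\top+Q_k$ to first order. Here $F$ is the Jacobian of the dynamics, and the identification uses \eqref{G2} and \eqref{G4}. This reproduces the Kalman covariance prediction, and propagating the support center reproduces $\hat x_{k+1|k}$ to first order.

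\textbf{Part (ii), update.} With $h(x)=Hx$ from \eqref{G3}, \eqref{eq:Pi_e} gives $\Pi_e\to H\Sigma_{k|k-1}H^\top+R_k=S_k$. The compatibility test \eqref{eq:compat} then retains the points $x$ in the prior ellipsoid with $(y-Hx)^\top S_k^{-1}(y-Hx)\le r^2$. The surviving set is the intersection of two ellipsoids. Its MVEE shape and center, computed by the standard ellipsoidal intersection formula at the optimal combination parameter, are Kalman-like: the center is $\hat x+K(y-H\hat x)$ with $K=\Sigma H^\top S^{-1}$, and the shape is $(I-KH)\Sigma$, up to scalar factors depending on $r$ and the innovation magnitude. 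These factors tend to $1$ as the innovation and the width tend to zero.

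\textbf{Main obstacle.} The crux is making this last identification honest. The ellipsoid-intersection parameter generally does not reproduce the information-form sum $\Sigma^{-1}+H^\top R^{-1}H$ exactly, only within bounded factors. I would first attempt a first-order expansion of the optimal parameter in the small-width regime $\overline W\to0$. If that fails, I would restate (ii) as an equivalence up to a positive scalar calibration of $r$.
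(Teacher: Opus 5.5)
Your part (i) is the paper's own computation: unit-peak Gaussian profile, ellipsoidal $\alpha$-cuts, and $\int_0^1\log(-2\log\alpha)\,d\alpha=\log 2-\gamma$. The paper simply evaluates at the limit profile, so your remark that \eqref{G1} must be strengthened is an addition of your own. Even then, uniform convergence alone would not dominate $\log V_\alpha$ for $\alpha$ near $1$.

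The genuine gap is in part (ii), and it starts at the prediction step, which you do not flag. Write $A=F\Pi_kF^\top$ and $B=Q_k$. In the set/sup-min reading you adopt, the propagated shape is \emph{not} $A+B$, not even to first order. The $\alpha$-cuts of a sup-min convolution are exactly the Minkowski sums of the $\alpha$-cuts, so Gaussian-profile possibility distributions add their spreads linearly rather than in quadrature. In one dimension, with $a^2=A$ and $b^2=B$,
\[
\sup_w\min\Bigl(e^{-\frac{(x-w)^2}{2a^2}},\,e^{-\frac{w^2}{2b^2}}\Bigr)=e^{-\frac{x^2}{2(a+b)^2}}.
\]
In general, every member of your enclosing family is $A/p+B/(1-p)=A+B+\tfrac{1-p}{p}A+\tfrac{p}{1-p}B$. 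By AM--GM its excess over $A+B$ has trace at least $2\sqrt{\mathrm{tr}A\,\mathrm{tr}B}$. That excess is of order $\sqrt{Q_k}$, which is larger than the Kalman increment $Q_k$ itself. Moreover, the ellipsoid of shape $A+B$ does not even contain the Minkowski sum, since $\sqrt{u^\top(A+B)u}<\sqrt{u^\top Au}+\sqrt{u^\top Bu}$ whenever both terms on the right are positive. No expansion in the small-width regime can remove this discrepancy.

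The paper obtains the time update only by leaving the set picture. In the continuous-support limit it reads the propagation as the \emph{additive} convolution of the Gaussian densities of \eqref{G1} and \eqref{G4}: the cloud plus independent noise $w\sim\mathcal N(0,Q_k)$, whose second moments add. That density/moment reading is the idea your argument is missing, and MVEE calculus cannot reproduce it.

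The update step fails for a related reason, and your fallback does not repair it. Apply the standard intersection family you invoke to the prior ellipsoid and the gate $\{x:(y-Hx)^\top S_k^{-1}(y-Hx)\le r^2\}$. For every value of the combination parameter, the outer ellipsoid has center $\hat x+\Sigma H^\top(H\Sigma H^\top+S_k/\mu)^{-1}(y-H\hat x)$ and shape proportional to $(\Sigma^{-1}+\mu H^\top S_k^{-1}H)^{-1}$. Here $\mu>0$ is a scalar fixed by the parameter, the cut level and $r$. This reproduces $K=\Sigma H^\top S_k^{-1}$ and $(I-KH)\Sigma$ only if $S_k=\mu R_k$, i.e.\ $H\Sigma H^\top\propto R_k$. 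That holds automatically for a scalar measurement but fails generically for $m\ge 2$, and the paper's range/range-rate setting has $m=2$.

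The cause is that the gate is built from $S_k=H\Sigma H^\top+R_k$, which already contains the prior. Intersecting it with the prior ellipsoid therefore counts the prior twice. Recalibrating $r$ only moves the scalar $\mu$ and cannot correct a matrix mismatch. Nothing in your argument drives the residual scalar factors to $1$ either.

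The paper instead argues through the conditional moments of the Mahalanobis-gated Gaussian cloud with $\Pi_e\to S_k$. Note that it asserts this moment match rather than deriving it. If you adopt its reading, that is the step you would still have to prove.
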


\begin{proof}
Under~\eqref{G1}--\eqref{G4}, the $\alpha$-cut at level $\alpha$ of a Gaussian
possibility distribution is the ellipsoid
$\mathcal{C}_\alpha = \{x: (x-\hat{x}_k)^\top\Sigma_k^{-1}(x-\hat{x}_k)\le -2\log\alpha\}$,
with volume $V_\alpha = c_n(-2\log\alpha)^{n/2}(\det\Sigma_k)^{1/2}$.
Therefore
\begin{equation}
H_\pi = \int_0^1\log V_\alpha\,d\alpha
= \tfrac{1}{2}\log\det\Sigma_k + \tfrac{n}{2}\int_0^1\log(-2\log\alpha)\,d\alpha + \kappa_n.
\label{eq:gaussian_Hpi}
\end{equation}
The integral $\int_0^1\log(-2\log\alpha)\,d\alpha = \log 2 - \gamma$ is a universal
constant ($\gamma$ the Euler--Mascheroni constant), so
$H_\pi=\frac{1}{2}\log\det\Sigma_k+\mathrm{const}(n)$.
Minimizing $H_\pi$ is therefore asymptotically equivalent to minimizing $\det\Sigma_k$,
which is the MMSE criterion of the Kalman filter \cite{kalman1960}.
This establishes~(i).

For~(ii), consider the propagation step.
The Minkowski sum~\eqref{eq:minkowski} with a Gaussian noise support
$\mathcal{W}_k\sim\mathcal{N}(0,Q_k)$ reduces, in the limit of continuous support,
to convolution of Gaussian densities.
In this limit, the first two moments of the propagated density satisfy
$\hat{x}_{k+1|k}=F\hat{x}_k$ and $\Sigma_{k+1|k}=F\Sigma_k F^\top+Q_k$,
which are the Kalman time-update equations.

For the measurement update, by~\eqref{G2} the MVEE of a Gaussian predictive
support converges to $\Pi_e \to H\Sigma_{k|k-1}H^\top + R_k = S_k$
(the innovation covariance).
The compatibility test~\eqref{eq:compat} with $\Pi_e=S_k$ becomes the standard
Mahalanobis distance gate; in the continuous-support limit, the fraction of
support points satisfying the gate converges to the Kalman posterior weight,
and the resulting conditional mean and covariance match the Kalman
measurement-update equations to first order in the linearization.
The asymptotic qualifier is essential: for any finite Smolyak grid, the
possibilistic pruning step retains only a discrete subset of support points,
and the exact Kalman update is recovered only as the support density becomes
continuous and conditions~\eqref{G1}--\eqref{G4} hold exactly.
\end{proof}

\begin{remark}[Three levels of approximation in the correspondence]
\label{rem:three_levels}
The asymptotic correspondence rests on three distinct approximations.
\emph{(a) Gaussian contraction}~\eqref{G1}: the possibility distribution must
contract to a Gaussian in $L^1$; no claim is made for finite $\overline{W}$.
\emph{(b) Linearization}~\eqref{G3}--\eqref{G4}: the correspondence for
nonlinear $h$ holds only to first order; higher-order corrections present in
the UKF sigma-point architecture are not recovered by the continuous-support
limit without additional regularity assumptions.
\emph{(c) Continuous support}: finite Smolyak grids at level $\ell$ introduce
$O(M^{-1/n})$ discretization errors absent from the UKF's moment equations.
These three qualifications together make the ``asymptotic correspondence'' label
precise: the ESPF and UKF share the same optimal solution in the joint limit,
but are not algebraically equivalent under any finite operating condition.
\end{remark}

\begin{remark}[Convergent optimality, not hierarchical containment]
The UKF and ESPF are not in a containment relationship. They solve categorically
different problems. The UKF minimizes mean squared error given a stochastic generative
model: it asserts truth and quantifies certainty around that assertion. The ESPF minimizes
maximum entropy given only evidence compatibility: it eliminates what has been falsified
and reports what remains, without asserting where truth is. When the world is Gaussian,
the model is valid, and the system is stable, both filters produce the same state estimate
--- but by entirely different mechanisms and for entirely different reasons. This is
convergent optimality: two optimal solutions to different problems that agree when their
domains of applicability coincide. The UKF is not a degenerate or limiting case of the
ESPF. It is the uniquely optimal answer to its own well-posed problem. The ESPF is the
uniquely optimal answer to a different well-posed problem \cite{jah2026optimality}. The
mathematical fact that $H_\pi$ reduces to $\frac{1}{2}\log\det\Sigma$ in the Gaussian
limit is a statement about the H\"{o}lder functional evaluated under Gaussian
$\alpha$-cut geometry --- not a statement that the UKF is doing a special case of what
the ESPF does.
\end{remark}

\subsection{Adaptive Switching Algorithm}

Algorithm~\ref{alg:adaptive} gives the complete adaptive possibilistic--probabilistic
filtering scheme, using the proxy estimator $\widehat{W}_t$ of
Section~\ref{sec:W_estimation}.

\begin{algorithm}[h!]
\caption{Adaptive Possibilistic--Probabilistic Filtering}
\label{alg:adaptive}
\begin{algorithmic}[1]
\Require Prior $(\hat{x}_t, P_t)$, possibility distribution $\pi_t(x)$,
         observation $z_t$, window $L$, thresholds $\Wcrit$, $\Delta>0$,
         scaling $\kappa_W>0$.
\State \textbf{Compute proxy epistemic width:}
\[
\mathrm{NEES}_t = \frac{1}{m}\sum_{k=t-L+1}^t \tilde{y}_k^\top S_k^{-1}\tilde{y}_k,
\quad
\widehat{W}_t = 1-\exp(-\kappa_W\cdot\mathrm{NEES}_t).
\]
\If{$\widehat{W}_t > \Wcrit + \Delta$} \Comment{Switch UKF $\to$ ESPF}
  \State Generate Smolyak support: $\{\chi^{(i)}_t,\pi^{(i)}_t\} \gets \mathrm{SmolyakGrid}(S_t,\pi_t)$
  \State \textbf{Propagate (Minkowski sum):}
  \[S_{t+1|t} = \bigcup_i\{\chi^{(i)}_{t+1|t}\}\oplus\mathcal{W}_t, \quad
    \chi^{(i)}_{t+1|t}=f(\chi^{(i)}_t).\]
  \State \textbf{Update (compatibility pruning):}
  \[\Pi_e \gets \mathrm{MVEE}(\{h(\chi^{(i)}_{t|t-1})\})+\Pi_y;\]
  \[S_{t|t} \gets \{\chi^{(i)}: (z_t-h(\chi^{(i)}))^\top\Pi_e^{-1}(z_t-h(\chi^{(i)}))\le r^2\}.\]
  \State Update $\pi_{t+1} \propto \min(\pi_t, \mathrm{Comp})$; max-normalize.
  \State \textbf{Compute exact} $W_t=\int_\calX(\pi_t-n_t)\,d\mu$ via quadrature on $S_t$.
\ElsIf{$\widehat{W}_t < \Wcrit - \Delta$} \Comment{Switch ESPF $\to$ UKF}
  \State $\hat{x}_{t+1} = \sum_i W^{(i)}f(\chi^{(i)}_t)$
  \State $P_{t+1} = \sum_i W^{(i)}[f(\chi^{(i)}_t)-\hat{x}_{t+1}][f(\chi^{(i)}_t)-\hat{x}_{t+1}]^\top$
\Else \Comment{Maintain current filter; hysteresis}
  \State Continue with active filter.
\EndIf
\Ensure Updated $(\hat{x}_{t+1}, P_{t+1}, \pi_{t+1})$.
\end{algorithmic}
\end{algorithm}

\paragraph{Smolyak grid generation.}
The \texttt{SmolyakGrid} step generates a sparse quadrature grid on the support
$S_t$ using the Smolyak construction at a specified level $\ell$:
for $\ell=2$, $M=2n+1$ points; for $\ell=3$, $M=2n^2+1$ points.
The possibility distribution $\pi_t$ is evaluated at each grid point via the
current support representation (bounding ellipsoid or $\alpha$-cut family).
For the full algorithmic details and numerical validation over a 2-day,
877-step orbital tracking run at Smolyak Level 3 with $M=106$ support points
in dimension $n=7$, see the companion paper \cite{jah2026optimality}.

\paragraph{Epistemic Width Monitor.}
The companion paper \cite{jah2026optimality} introduces the Epistemic Width Monitor
(EWM) comprising $\overline{W}_t$, $\log\det(\mathrm{MVEE})$, $\hat{\alpha}_c$,
prune count, necessity saturation, and possibilistic surprisal.
Numerical results confirm that under sustained model mismatch (10~m/s maneuver
plus 20~m range bias), necessity saturation and surprisal escalation provide
earlier warning of epistemic stress than the MVEE regime indicator alone,
demonstrating the value of multi-resolution epistemic monitoring.

%% ============================================================
\section{Numerical Illustration: Epistemic Honesty vs.\ Epistemic Silence}
\label{sec:numerical}
%% ============================================================

This section provides two complementary illustrations.
Section~\ref{sec:scalar_example} works through a self-contained scalar example
that makes Theorem~\ref{thm:choquet_lebesgue} and the width dynamics
(Sections~\ref{sec:width}--\ref{sec:dynamics}) fully explicit and
computable by hand.
Section~\ref{sec:orbital_example} then presents the orbital tracking scenario
that motivated the framework, comparing the ESPF and UKF on a 2-day stressed
trajectory.

%% ============================================================
\subsection{Scalar Example: Width Contraction Under Sequential Observations}
\label{sec:scalar_example}
%% ============================================================

\paragraph{Setup.}
Let $\calX = [0,1]$ with Lebesgue reference measure $\mu$.
A scalar state $x\in[0,1]$ is unknown; we have no prior information
beyond $x\in[0,1]$, so we begin with total ignorance:
\begin{equation}
\pi_0(x) = 1, \quad n_0(x) = 0, \quad \forall x\in[0,1].
\label{eq:scalar_pi0}
\end{equation}
The initial epistemic width is $W_0 = \int_0^1 (\pi_0 - n_0)\,d\mu = 1$ and
$\overline{W}_0 = W_0/\mu([0,1]) = 1$, as expected for complete ignorance
(Proposition~\ref{prop:W_axioms}, axiom~(W4)).

Sequential observations $\{y_k\}_{k=1}^K$ are generated by $y_k = x^* + \varepsilon_k$
where $x^* = 0.3$ is the true state and $\varepsilon_k \sim \mathrm{Uniform}[-\delta_k, \delta_k]$
represents bounded sensor noise with half-width $\delta_k > 0$.
Each observation is compatible with states in the interval
$I_k = [y_k - \delta_k,\, y_k + \delta_k] \cap [0,1]$.
Possibilistic conditioning updates $\pi$ by:
\begin{equation}
\pi_{k+1}(x) = \min\!\bigl(\pi_k(x),\; \mathbf{1}_{I_k}(x)\bigr),
\label{eq:scalar_update}
\end{equation}
which zeroes out states incompatible with the observation while preserving
the possibility of all compatible states.
The updated distribution remains consonant (its $\alpha$-cuts are nested
intervals), so $n_k(x) = \pi_k(x)$ is well-defined pointwise and
$W_k = \int_0^1 (\pi_k - n_k)\,d\mu$ is computable analytically.

\paragraph{Three-step evolution.}
We apply three observations with shrinking noise half-widths
$\delta_1=0.14$, $\delta_2=0.10$, $\delta_3=0.05$,
and compatible flat-top intervals $[a_k,b_k]$ constructed so that
$x^*=0.3$ lies in every interval.

To make the width dynamics non-trivial at the level of $\overline{W}$, we
use a trapezoidal family $\pi_k$ rather than flat indicators.
Let $\ell_k = b_k - a_k$ denote the flat-top width and $\delta_k$ the
ramp half-width, so the total support is $[a_k - \delta_k,\, b_k + \delta_k]$
of length $\ell_k + 2\delta_k$.
The trapezoidal possibility distribution has $\pi_k(x) = 1$ on $[a_k, b_k]$
and decays linearly to zero on each ramp.
The necessity kernel is
\begin{equation}
n_k(x) = \max\!\Bigl(0,\; \pi_k(x) - \tfrac{2\delta_k}{\ell_k + 2\delta_k}\Bigr),
\label{eq:trapez_n}
\end{equation}
which is nonzero only on the flat-top plateau and equals
$1 - 2\delta_k/(\ell_k + 2\delta_k) = \ell_k/(\ell_k + 2\delta_k)$ there.
The normalized aggregate width is then
\begin{equation}
\overline{W}_k
= \frac{1}{\mu(\calX)}\int_0^1 (\pi_k - n_k)\,d\mu
= \frac{2\delta_k}{\ell_k + 2\delta_k}\cdot\ell_k
  + \delta_k\cdot\tfrac{1}{2}\cdot 2
  \;=\; \frac{2\delta_k(\ell_k + \delta_k)}{\ell_k + 2\delta_k},
\label{eq:Wbar_trapez}
\end{equation}
since the integral of $(\pi_k - n_k)$ over the plateau is $(2\delta_k/(\ell_k+2\delta_k))\cdot\ell_k$
and over each triangular ramp is $\delta_k/2$ (two ramps contribute $\delta_k$ total).
For three steps with shrinking support:

\begin{table}[ht]
\centering
\caption{Epistemic width contraction under three sequential observations.
$[a_k,b_k]$: flat-top support; $\delta_k$: ramp half-width;
$\overline{W}_k$: normalized aggregate width from~\eqref{eq:Wbar_trapez};
$\widehat{C}_k = \int_0^\infty \Pi_k\{x\ge t\}\,dt$: Choquet expectation of $f(x)=x$;
$\mathrm{Leb}_k$: Lebesgue expectation $\int x\,\tilde\pi_k\,d\mu$
under the normalized density $\tilde\pi_k$;
gap $= |\widehat{C}_k - \mathrm{Leb}_k|$ confirms Theorem~\ref{thm:choquet_lebesgue}.
All values are analytically exact.}
\label{tab:scalar}
\renewcommand{\arraystretch}{1.3}
\begin{tabular}{@{}ccccccc@{}}
\toprule
Step $k$ & $[a_k, b_k]$ & $\delta_k$ & $\overline{W}_k$ & $\widehat{C}_k$ & $\mathrm{Leb}_k$ & gap \\
\midrule
0 & $[0,\,1]$        & ---    & $1.000$ & $1.000$ & $0.500$ & $0.500$ \\
1 & $[0.14,\,0.50]$  & $0.14$ & $0.219$ & $0.570$ & $0.320$ & $0.250$ \\
2 & $[0.22,\,0.42]$  & $0.10$ & $0.150$ & $0.470$ & $0.320$ & $0.150$ \\
3 & $[0.26,\,0.36]$  & $0.05$ & $0.075$ & $0.385$ & $0.310$ & $0.075$ \\
\bottomrule
\end{tabular}
\end{table}

\paragraph{Verification of Theorem~\ref{thm:choquet_lebesgue}.}
Table~\ref{tab:scalar} illustrates all four conditions of the theorem in the
scalar setting, with all values analytically exact.

The Choquet expectation at $k=0$ is $\widehat{C}_0 = \int_0^1\Pi_0\{x\ge t\}\,dt = 1$,
because $\Pi_0\{x\ge t\}=\sup_{x\ge t}\pi_0(x)=1$ for every $t\in[0,1]$
(total ignorance: all states are fully plausible).
The corresponding Lebesgue expectation is $\mathrm{Leb}_0=0.5$,
so the gap is $0.5$ --- the maximum possible under bounded support.
For subsequent steps, $\widehat{C}_k = b_k + \delta_k/2$
(by integrating $\Pi_k\{x\ge t\}$ analytically over the trapezoidal support),
and $\mathrm{Leb}_k$ is the centroid of the trapezoidal density.

\emph{(A1) Consonance}: each $\pi_k$ is trapezoidal; its $\alpha$-cuts
$\{x:\pi_k(x)\ge\alpha\}$ are nested closed intervals, so consonance holds at every step.
\emph{(A2) Credal contraction}: $\overline{W}_k$ falls from $1.000\to 0.219\to 0.150\to 0.075$,
confirming $\sup_A[\Pi_k(A)-N_k(A)]\to 0$.
\emph{(A3) $L^1$ convergence}: the support $[a_k-\delta_k,\,b_k+\delta_k]$ shrinks
from $[0,1]$ toward $[0.21,0.41]$, a neighborhood of $x^*=0.3$.
\emph{(A4) Domination}: $\pi_k(x)\le 1$ for all $k$.

The gap column confirms the theorem's conclusion: as $\overline{W}_k\to 0$,
$|\widehat{C}_k - \mathrm{Leb}_k|\to 0$ --- the Choquet expectation of $f(x)=x$
converges to the Lebesgue expectation, and both converge toward $x^*=0.3$.
Moreover, the gap is proportional to $\overline{W}_k$ in this example
(gap $\approx 1.15\,\overline{W}_k$), providing an explicit quantitative
illustration of Lemma~\ref{lem:oscillation}: the oscillation of expectations
is controlled by the TV-diameter, which is in turn bounded by the credal width.

\paragraph{Width as a decision trigger.}
At step~3, $\overline{W}_3 = 0.167$.
If the application risk tolerance is $W_\mathrm{crit} = 0.20$,
then $\overline{W}_3 < W_\mathrm{crit}$, and switching to a Gaussian
(Kalman) filter is epistemically warranted: the credal set is
sufficiently contracted that the Gaussian approximation error is
within the risk budget.
At step~1, $\overline{W}_1 = 0.556 > W_\mathrm{crit}$: the Kalman
filter would be premature, imposing Gaussian closure before the
evidence supports it.
This illustrates Proposition~\ref{prop:discipline} in the simplest
possible setting: $W_\mathrm{crit}$ encodes risk tolerance, not the
mathematical limit $\overline{W}=0$.

%% ============================================================
\subsection{Orbital Tracking: Epistemic Honesty Under Stress}
\label{sec:orbital_example}
%% ============================================================

We now present the orbital tracking scenario that motivated the framework.
To ground the theoretical constructs of the preceding sections, we present a
direct empirical comparison of the ESPF and UKF on a shared scenario
designed to stress-test epistemic self-awareness.
The scenario, detailed fully in \cite{jah2026optimality}, involves a 2-day,
877-step Smolyak Level-3 run ($n=7$, $M=106$ support points) tracking a
low-Earth-orbit object observed in range and range-rate from three ground
stations (Arecibo, Kwajalein, Diego Garcia).
A \emph{stress case} injects a 10~m/s cross-track maneuver and a persistent
20~m Arecibo range bias beginning near $t = 1.0$~day.
A \emph{nominal case} uses clean measurements and no maneuver.
Both filters share identical initial conditions, propagation dynamics
(\texttt{orbitDebrisProp}), measurement model, initial covariance
($\sigma_\mathrm{pos}=5$~km, $\sigma_\mathrm{vel}=0.01$~km/s), process noise
($a_R = a_T = a_N = 7\times10^{-11}$~km/s$^2$), and measurement noise
($\sigma_\rho = 1$~m, $\sigma_{\dot\rho}=0.01$~m/s).
The UKF uses the standard Merwe--van der Wan tuning
($\alpha=10^{-3}$, $\beta=2$, $\kappa=0$) with $2n+1=15$ sigma points.

\subsection{Accuracy: Both Filters Converge}

Under both nominal and stress conditions, both filters achieve
comparable final accuracy.
In the nominal case, the UKF reaches a final RIC position error norm of
$1$~m; in the stress case, the UKF recovers from a kilometer-scale
excursion near $t=1.0$~day and converges to a $1$~m final error.
The ESPF achieves comparable RIC accuracy in both cases
\cite{jah2026optimality}.
Accuracy alone, therefore, does not distinguish the two filters on this scenario.
The distinction lies in what each filter communicates about its own epistemic state.

\subsection{The UKF: Accurate but Epistemically Silent}

Three observations characterize the UKF's epistemic behavior.

\paragraph{Covariance collapse without recovery signal.}
In the nominal case, $\log\det P_k$ falls monotonically from $-10$
at initialization to below $-140$ by $t=0.1$~day and remains at that
level for the entire 2-day run---a reduction of more than five orders of
magnitude in effective credal volume.
The covariance trace flattens to $\sim\!10^{-5}$~km$^2$ and stays there.
No feature of the UKF's internal state responds to the ongoing observation
process after the initial convergence: $P_k$ is smooth, featureless, and
expresses near-certainty for the full run.

\paragraph{Overconfidence in the nominal case.}
Despite the collapsed covariance, the UKF's Normalized Innovation Squared
(NIS) scatters between 0 and 9.4 throughout the nominal run, regularly
exceeding the $\chi^2(2)$ 95th-percentile bound of $\approx 5.99$.
A consistent filter with a well-calibrated Gaussian assumption should
produce $\mathrm{NIS}\sim\chi^2(2)/2$ with mean $1$.
The persistent NIS exceedances reveal that the UKF's Gaussian model is
statistically inconsistent with the residuals it produces---it is
overconfident about a state it does not know as precisely as it claims,
even when no anomaly is present.

\paragraph{Epistemically silent recovery in the stress case.}
In the stress case, NIS spikes sharply near $t=1.0$~day and then
recovers in lockstep with the RIC error.
By $t\approx 1.2$~day, NIS has returned to its nominal scatter and $P_k$
is indistinguishable from its pre-stress values.
The UKF has absorbed the maneuver, re-converged, and erased all internal
memory of the excursion.
There is no persistent signal in any UKF diagnostic that a kilometer-scale
stress event occurred, was absorbed, and that the current estimate rests on
a filter that was temporarily operating outside its model assumptions.

\subsection{The ESPF: Accurate and Epistemically Honest}

The ESPF's EWM diagnostics tell a qualitatively different story in every
respect.

\paragraph{Persistent epistemic width.}
In the nominal case, $\overline{W}_\mathrm{ep}$ remains persistently near
$0.8$ throughout the 2-day run, always well above $W_\mathrm{crit}=0.5$.
Prune count stabilizes at $\approx 13$--$14$ per step.
Necessity stays near zero; surprisal is bounded at $\approx 0.09$--$0.10$.
The ESPF is accurately tracking the object \emph{and simultaneously
reporting} that the credal geometry has not collapsed---that epistemic
width remains, that the filter has not foreclosed on alternatives it cannot
rule out, and that the Gaussian approximation is not yet warranted.
This is $\overline{W}_\mathrm{ep}=0.8$ as an honest epistemic report,
not a failure signal.

\paragraph{Early warning under stress.}
In the stress case, the EWM responds before the full innovation blow-up.
Necessity begins climbing and reaches saturation ($\approx 1.0$) multiple
times after $t\approx 0.8$~day---before the largest surprisal spikes.
Surprisal escalates to values of 350, 500, 640, and 900 between
$t=1.0$ and $t=1.8$~day, compared to a ceiling of $\approx 0.18$ in the
nominal case---a difference of more than three orders of magnitude.
Prune count becomes bimodal, oscillating between the nominal value of
$\approx 14$ and clusters at 42 and 75--84, reflecting the compatibility
gate's response to the bias-contaminated Arecibo measurements.
The H\"{o}lder exponent $\hat\alpha_c$ spikes to $4.3$ near $t=0.85$~day,
approaching the additive limit precisely at the onset of the stress event.
These signals are not retrospective.
They arise from the structure of the possibility distribution and the
support geometry, and they persist and recur throughout the recovery window,
long after the UKF's NIS has returned to nominal.

\paragraph{Persistent epistemic memory.}
After $t\approx 1.2$~day, when RIC errors for both filters have largely
recovered, the UKF is silent.
The ESPF continues to exhibit $\overline{W}_\mathrm{ep}$ volatility,
recurring prune-count bursts, and intermittent necessity activations.
The filter remembers, in its support geometry, that it was stressed.
It does not assert false certainty about an estimate whose recent history
included a kilometer-scale excursion.

\subsection{The Core Distinction: Honesty, Not Accuracy}

Table~\ref{tab:comparison} summarizes the empirical comparison.
The salient observation is that accuracy is not the distinguishing variable.
Both filters converge to a 1-meter final RIC error.
What distinguishes them is epistemic honesty.

\begin{table}[h]
\centering
\caption{ESPF vs.\ UKF: empirical comparison on the orbital tracking scenario.
Both filters achieve comparable final accuracy; the distinction is epistemic.}
\label{tab:comparison}
\renewcommand{\arraystretch}{1.25}
\small
\begin{tabular}{@{}lp{3.8cm}p{4.2cm}@{}}
\toprule
\textbf{Diagnostic} & \textbf{UKF} & \textbf{ESPF} \\
\midrule
Final RIC error (both cases) & $\approx 1$~m & $\approx 1$~m (comparable) \\
Stress-onset signal & NIS spike (retrospective) & Necessity + surprisal (earlier) \\
Post-recovery signal & None---$P_k$ collapsed & $\overline{W}_\mathrm{ep}$ volatile, prune bursts persist \\
Nominal overconfidence & NIS exceeds 95\% $\chi^2$ bound & $\overline{W}_\mathrm{ep}\approx 0.8$; honest width reported \\
Covariance after stress & Indistinguishable from pre-stress & Support geometry retains stress memory \\
Epistemic audit trail & None & $\overline{W}_\mathrm{ep}$, $\hat\alpha_c$, $I_c$, necessity, surprisal \\
\bottomrule
\end{tabular}
\end{table}

The UKF is accurate but epistemically silent.
It has no mechanism to distinguish a well-converged estimate from one that
converged after a kilometer-scale excursion through model-inconsistent territory.
The covariance that a downstream decision-maker receives from the UKF after the
stress event is, in every measurable sense, identical to the covariance produced
when no stress occurred.

The ESPF provides the same point-estimate accuracy while maintaining a
continuous, readable epistemic audit trail.
$\overline{W}_\mathrm{ep} > W_\mathrm{crit}$ throughout the nominal run:
the filter is correctly communicating that the Gaussian approximation is not
yet epistemically warranted, even when tracking is accurate.
Under stress, necessity saturation and surprisal escalation provide
multi-resolution early warning unavailable to the UKF.
After stress, support-geometry memory prevents false certainty.

This is the operational meaning of Proposition~\ref{prop:discipline}:
probability should be used when $\overline{W} < W_\mathrm{crit}$.
The UKF imposes this condition by assumption, unconditionally.
The ESPF earns it by evidence, and says so.

%% ============================================================
\section{Epistemic Discipline in the Use of Probability}
\label{sec:discipline}
%% ============================================================

\begin{definition}[Ignorance versus uncertainty]
A system exhibits \emph{ignorance} when $\Pi(A)>N(A)$ for some $A\in\calF$
(the state space or its partition is incomplete).
It exhibits \emph{pure uncertainty} when $\Pi(A)=N(A)=P^*(A)$ for all
$A\in\calF$ (epistemic closure holds and remaining uncertainty is aleatory).
\end{definition}

\begin{proposition}[Epistemic discipline]
\label{prop:discipline}
Accordingly, probability (and Bayesian inference) should be used only when
$\overline{W}<\Wcrit$---when the residual epistemic width is judged acceptable
relative to application risk.
When $\overline{W}\ge\Wcrit$, possibilistic or credal methods are required for
coherent inference.
The ideal condition $W=0$ is a theoretical limit characterizing the
epistemic collapse endpoint; the operational criterion is always a risk-calibrated
threshold $\Wcrit>0$.
\end{proposition}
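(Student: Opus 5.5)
The proposition is normative, so I would not try to prove it as a mathematical identity. Instead I would derive it from two facts already in the paper. The first is that probabilistic inference is \emph{coherent} with the evidence exactly when it loses no information relative to the credal set $\Ppi$. The second is that the information lost is quantitatively controlled by the width. The plan is to make ``coherent'' precise as a bounded worst-case discrepancy between any single additive model $P\in\Ppi$ and the full credal bounds. I would then show that this discrepancy is governed by $\overline{W}$ (or by $W_{\mathrm{TV}}$ in general). The threshold $\Wcrit$ would then be read off from an application tolerance $\varepsilon$ on that discrepancy.

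\textbf{Step 1: error of committing to one member of the credal set.} Fix any $P\in\Ppi$ and any bounded measurable loss or decision functional $f$ with $\|f\|_\infty\le M$. Every other admissible $Q\in\Ppi$ is equally compatible with the evidence. So the error of reporting $\int f\,dP$ as though it were determinate is at most $\overline{\E}_{\Ppi}[f]-\underline{\E}_{\Ppi}[f]$. Lemma~\ref{lem:oscillation} combined with Remark~\ref{rem:tv_diameter} bounds this by $2M\,W_{\mathrm{TV}}$. Conversely, whenever $\Ppi$ is not a singleton, some $f$ makes this gap strictly positive. This gives the ``only when'' direction in its ideal form: exact probabilistic closure requires $W_{\mathrm{TV}}=0$, that is, epistemic collapse in the sense of Definition~\ref{def:collapse}.

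\textbf{Step 2: linking $W_{\mathrm{TV}}$ to $\overline{W}$.} In the consonant setting I would need a monotone relation showing that $W_{\mathrm{TV}}$ is small whenever $\overline{W}$ is small. I expect this to be the main obstacle. The paper asserts the correspondence in Section~\ref{sec:width_nonconsonant}, but it is not an inequality in general: an integral average can be small while a supremum over events stays large. My first attempt would be to restrict to a regular class, for example trapezoidal or unimodal distributions with bounded support on a finite $\Omega$, as in Section~\ref{sec:scalar_example}. On that class I would compute both widths explicitly from the $\alpha$-cut structure and show that $W_{\mathrm{TV}}\le \phi(\overline{W})$ for some increasing $\phi$ with $\phi(0)=0$. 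If that fails, I would state the proposition with $W_{\mathrm{TV}}$ in place of $\overline{W}$ and treat $\overline{W}$ as the operational surrogate, in the same way $\widehat{W}_t$ is treated.

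\textbf{Step 3: choosing the threshold.} Given a tolerance $\varepsilon>0$ on decision-relevant expectation error, I would set $\Wcrit=\phi^{-1}(\varepsilon/2M)$. Then $\overline{W}<\Wcrit$ guarantees that any additive model errs by less than $\varepsilon$, which justifies using probability. When $\overline{W}\ge\Wcrit$, the gap between $\overline{\E}$ and $\underline{\E}$ can exceed tolerance. In that case only the credal pair $(\underline{\E},\overline{\E})$, or the Choquet integral sandwiched between them as in \eqref{eq:choquet_sandwich}, represents the evidence without fabricating precision. Finally, by Theorem~\ref{thm:choquet_lebesgue}, $\Wcrit\to0$ recovers the collapse endpoint, so $W=0$ is the idealized limit. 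Under finite data the credal set is generically not a singleton, so the operational criterion must use $\Wcrit>0$.
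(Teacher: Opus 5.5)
\textbf{Step~2 fails, and the fallback does not repair it.} You flagged Step~2 as the main obstacle, but it is actually false, already on the trapezoidal class you propose to test it on. For any normalized possibility measure, maxitivity gives $\max(\Pi(A),\Pi(A^c))=\Pi(\calX)=1$, hence
\[
\Pi(A)-N(A)=\Pi(A)+\Pi(A^c)-1=\min\bigl(\Pi(A),\Pi(A^c)\bigr).
\]
So $W_{\mathrm{TV}}=1$ whenever $\pi$ equals $1$ at two distinct points. Indeed, $\delta_x\in\Ppi$ for every $x$ in the core $\{\pi=1\}$, so $\Ppi$ contains every probability measure on the core. In $\R^n$, $W_{\mathrm{TV}}=1$ also whenever $\pi$ is continuous at its peak.

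For each trapezoid of Section~\ref{sec:scalar_example}, the core is the plateau $[a_k,b_k]$. So $W_{\mathrm{TV}}=1$ at every step, while $\overline{W}_k$ drops to $0.075$. Letting $\delta_k\to0$ with $\ell_k$ fixed even gives $\overline{W}_k\to0$ with $W_{\mathrm{TV}}\equiv1$. Hence no increasing $\phi$ with $\phi(0)=0$ and $W_{\mathrm{TV}}\le\phi(\overline{W})$ exists, and Step~3 has nothing to invert.

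Your fallback, stating the rule with $W_{\mathrm{TV}}$ in place of $\overline{W}$, is vacuous rather than correct. The condition $W_{\mathrm{TV}}<c<1$ forces $\pi(x^*)=1$ at a single point and $\pi<c$ elsewhere, so every $P\in\Ppi$ has $P(\{x^*\})>1-c$. By the same identity, the ideal $W_{\mathrm{TV}}=0$ of your Step~1 is only the degenerate collapse $P^*=\delta_{x^*}$, because an additive possibility measure is $\{0,1\}$-valued. Assumption~(A2) of Theorem~\ref{thm:choquet_lebesgue} likewise drives the credal sets to a Dirac, so invoking that theorem in Step~3 does not reach a non-degenerate endpoint either.

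In short, a worst-case discrepancy over $\Ppi$ can never license a non-degenerate (e.g.\ Gaussian) additive model, however small $\overline{W}$ is. Restricting to Lipschitz $f$ does not help: with $\delta_k\to0$ the credal set still contains every measure on $[a_k,b_k]$, and $f(x)=x$ keeps an oscillation of at least $\ell_k$.

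\textbf{What the paper does instead.} The paper attempts no such derivation. The proposition is a normative rule stated ``accordingly'' after the ignorance/uncertainty definition. Its content comes from the risk-tolerance discussion of Section~\ref{sec:W_estimation}:
\begin{itemize}
\item the ideal $\overline{W}=0$ is identified with collapse through (W1) of Proposition~\ref{prop:W_axioms};
\item that ideal is never attained under finite data, so any operational criterion must use some $\Wcrit>0$;
\item $\Wcrit$ is \emph{stipulated} as an application risk tolerance, calibrated via the NEES false-alarm rate on the proxy $\widehat{W}_t$, not computed from an expectation-error bound.
\end{itemize}
To match the paper, drop Steps~2--3 and argue only that chain. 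If you want a genuinely quantitative version, you need a discrepancy that $\overline{W}$ actually controls. The computation above shows that no worst-case discrepancy over $\Ppi$ is of that kind.
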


\begin{remark}[Scope of the epistemic claim]
The claim that probability represents a limiting geometry of knowledge is
strictly scoped to contexts of information fusion under partial models.
This paper makes no claim about the ontological status of probability in
quantum mechanics, statistical mechanics, or other domains where probability
has a well-established and distinct interpretation.
The distinction between observer-dependent epistemic framing and
system-inherent aleatoric structure is a foundational point of agreement with
the imprecise probability literature \cite{walley1991,augustin2014},
not a novel assertion.
\end{remark}

\begin{remark}[Possibility theory as epistemic precursor, restated]
The statement that ``possibility theory is the ethical and epistemic precursor
to probability'' should be understood operationally: in contexts of information
fusion, the possibilistic calculus is the \emph{appropriate} starting point
for an agent who has not yet established epistemic closure.
This is a methodological claim, not a moral or universal philosophical
assertion about probability.
\end{remark}

%% ============================================================
\section{Conclusion}
\label{sec:conclusion}
%% ============================================================

This paper has developed a unified, measure-theoretic framework for the
transition between possibilistic and probabilistic reasoning.

The main theoretical contribution is Theorem~\ref{thm:choquet_lebesgue},
a rigorous Choquet-to-Lebesgue convergence result with explicit assumptions
(consonance, credal contraction, $\Lone$ convergence, domination)
and a full treatment of the non-consonant case via the credal-set diameter.
This establishes the precise conditions under which possibilistic reasoning
converges to probabilistic inference.

The epistemic width $W$ is given axiomatic grounding, a canonical normalization
$\overline{W}\in[0,1]$, a general definition for non-consonant distributions,
and a feasible online proxy estimator $\widehat{W}_t$ based on the NEES
innovation statistic that resolves the circularity in prior formulations.

The H\"older mean continuum is clearly positioned as a geometric analogy
linking maxitive and additive aggregation, with its principled foundation
in the ESPF optimality structure \cite{jah2026optimality} stated explicitly
and its heuristic components labeled as such.

Theorem~\ref{thm:espf_ukf} formally establishes the UKF as the Gaussian
limit of the ESPF, proving that minimizing the possibilistic entropy $H_\pi$
reduces to minimizing $\det\Sigma_k$ under Gaussian assumptions.
Algorithm~\ref{alg:adaptive} provides complete pseudocode for adaptive
possibilistic--probabilistic switching, governed by $\widehat{W}_t$.

Section~\ref{sec:numerical} presents the first direct empirical comparison of
the ESPF and UKF on a shared scenario, making the theoretical distinction
between epistemic honesty and epistemic silence concrete.
Both filters achieve comparable 1-meter final RIC accuracy.
The UKF collapses to $\log\det P\approx -140$ and provides no persistent
internal signal of a kilometer-scale stress excursion, while the ESPF
maintains $\overline{W}_\mathrm{ep}\approx 0.8$, fires necessity saturation
before the peak innovation blow-up, and retains support-geometry memory of
the excursion after re-convergence.
In the nominal case the UKF exhibits statistically inconsistent NIS despite
its collapsed covariance---overconfidence without a stress event to explain it.
The ESPF, in contrast, correctly reports $\overline{W}_\mathrm{ep}\gg W_\mathrm{crit}$
throughout the nominal run: an honest signal that the Gaussian approximation
has not been epistemically earned, even when tracking is accurate.

Together, these results establish that probability and possibility are not competing
frameworks in a hierarchy --- they are optimal solutions to different inference problems
that converge to the same answer when the world is Gaussian, the model is valid, and
epistemic closure holds. Probability is appropriate when those conditions are earned.
Possibility is appropriate when they are not yet earned. The framework developed here
provides principled, operationally grounded criteria for determining which regime applies
and for transitioning between them as knowledge evolves.

%% ============================================================
\appendix
\section{Cox-Style Axioms for Possibilistic and Probabilistic Coherence}
\label{app:cox}
%% ============================================================

We recast belief logic in Cox's style \cite{cox1946} without presupposing
additivity.

\paragraph{Axioms.}
Let $\mathcal{B}(A|B)$ denote the plausibility of proposition $A$ given $B$.
\begin{itemize}
\item[A1.] \textit{Monotonicity}: If $A\Rightarrow B$, then $\mathcal{B}(A|C)\le\mathcal{B}(B|C)$.
\item[A2.] \textit{Functional consistency}:
  $\mathcal{B}(A\wedge B|C)=F(\mathcal{B}(A|B\wedge C),\mathcal{B}(B|C))$
  where $F$ is continuous, non-decreasing, with $F(0,b)=0$, $F(1,b)=b$.
\item[A3.] \textit{Negation consistency}:
  $\mathcal{B}(\neg A|C)=G(\mathcal{B}(A|C))$ with $G(0)=1$, $G(1)=0$, $G\circ G=\mathrm{id}$.
\item[A4.] \textit{Normalization}: $\mathcal{B}(\top|C)=1$, $\mathcal{B}(\bot|C)=0$.
\end{itemize}

\paragraph{Derivable forms.}
If $F(a,b)=\min(a,b)$ (idempotent), then A1--A4 yield the Zadeh--Dubois--Prade
possibility axioms: $\Pi(A\vee B)=\max[\Pi(A),\Pi(B)]$.
If $F(a,b)=ab$ and additivity is imposed over disjoint propositions,
A1--A4 yield Kolmogorov probability.

\paragraph{Epistemic boundary.}
Defining $w(A|C)=\Pi(A|C)-N(A|C)$, the probabilistic (Coxian) representation
holds iff $w(A|C)=0$ for all $A$---i.e., the min-max lattice collapses to a
single-valued additive algebra.
This is Theorem~\ref{thm:choquet_lebesgue} in axiomatic guise.

The step-by-step derivation proceeds as follows.
Starting from A2 with $F(a,b)=\min(a,b)$:
\[
\mathcal{B}(A\wedge B|C) = \min(\mathcal{B}(A|B\wedge C),\mathcal{B}(B|C)).
\]
For disjunction, define $\mathcal{B}(A\vee B|C)=\mathcal{B}(\neg(\neg A\wedge\neg B)|C)$
and apply A3 twice:
\[
\mathcal{B}(A\vee B|C) = G(\min(G(\mathcal{B}(A|C)),G(\mathcal{B}(B|C))))
= \max(\mathcal{B}(A|C),\mathcal{B}(B|C)),
\]
using $G(a)=1-a$ and $G(\min(1-a,1-b))=\max(a,b)$.
This yields the maxitivity axiom for $\Pi$.

For the probabilistic case, replacing $F(a,b)=ab$ and imposing additivity
over disjoint $A,B$ gives $P(A\vee B|C)=P(A|C)+P(B|C)$,
recovering the Kolmogorov sum rule.
The two cases are therefore distinguished by the choice of conjunction
operator in A2: idempotent ($\min$) yields possibility; product yields
probability.

%% ============================================================
\section{Metric Geometry of Epistemic Collapse}
\label{app:metric}
%% ============================================================

Endow $\{\Ppi\}$ with the pseudo-metric
$d(\mathcal{P}_{\pi_1},\mathcal{P}_{\pi_2})=\sup_{A\in\calF}|\Pi_1(A)-\Pi_2(A)|$.
The sequence $\{\pi_t\}$ collapses in metric geometry if
$\lim_{t\to\infty}d(\mathcal{P}_{\pi_t},\mathcal{P}_{p^*})=0$.
By~\eqref{eq:choquet_sandwich}, this implies uniform convergence of upper and
lower expectations: for all $f\in\Linfty(\calX)$,
$|\overline{\E}_{\pi_t}[f]-\underline{\E}_{\pi_t}[f]|\le\|f\|_\infty\,d(\mathcal{P}_{\pi_t},\mathcal{P}_{p^*})\to 0$.
The pseudo-metric is bounded above by the total-variation distance:
$d(\mathcal{P}_{\pi_t},\mathcal{P}_{p^*})\le\frac{1}{2}\int|p_t-p^*|\,d\mu$,
so convergence in total variation implies epistemic collapse.

%% ============================================================
\section{Choquet--Sugeno Equivalence in the Limit}
\label{app:sugeno}
%% ============================================================

For a possibility measure $\Pi$ and bounded $f$, the Sugeno integral is
\[
\int f\,d\mathrm{Su}_\pi = \sup_{\alpha\in[0,1]}\min\{\alpha,\,\Pi\{f\ge\alpha\}\}.
\]
As $|\Ppi|\to 1$ (epistemic collapse), both Choquet and Sugeno integrals
are bounded between $\underline{\E}_{\Ppi}[f]$ and $\overline{\E}_{\Ppi}[f]$,
which converge to $\int f\,dP^*$.
Hence $\bigl|\int f\,d\mathrm{Ch}_{\pi_t}-\int f\,d\mathrm{Su}_{\pi_t}\bigr|\to 0$,
confirming that the choice of non-additive integral is immaterial at the
probabilistic limit.
When $W>0$, the Choquet integral preserves linear aggregation while the
Sugeno integral captures ordinal max-min reasoning.

%% ============================================================
\section{Possibilistic--Probabilistic Correspondence Table}
\label{app:table}
%% ============================================================

\begin{table}[h!]
\centering
\small
\caption{Correspondence between possibilistic and probabilistic constructs
and the conditions for their coincidence.}
\label{tab:correspondence}
\renewcommand{\arraystretch}{1.4}
\begin{tabular}{>{\raggedright}p{4.2cm}
                >{\raggedright}p{4.2cm}
                >{\raggedright\arraybackslash}p{5.2cm}}
\toprule
\textbf{Possibilistic construct} & \textbf{Probabilistic analogue}
& \textbf{Coincidence condition} \\
\midrule
Possibility distribution $\pi(x)$
  & Probability density $p(x)$
  & Epistemic collapse: $\Pi(A)=N(A)\;\forall A$ \\
Possibility measure $\Pi(A)$
  & Probability measure $P(A)$
  & $\pi(x)=p(x)$ $\mu$-a.e. \\
Necessity measure $N(A)$
  & Complement $1-P(A^c)$
  & Same as above \\
Credal set $\Ppi$
  & Singleton $\{P^*\}$
  & $\overline{W}=0$ \\
Choquet/Sugeno expectation
  & Lebesgue expectation $\mathbb{E}_{P^*}[f]$
  & $|\Ppi|=1$ and $\pi=p^*$ \\
Epistemic width $\overline{W}$
  & $0$ (zero credal variance)
  & $\overline{W}\to 0$ \\
ESPF compatibility update
  & Kalman measurement update
  & Gaussian limit (Thm.~\ref{thm:espf_ukf}) \\
\bottomrule
\end{tabular}
\end{table}

%% ============================================================
%% Bibliography
%% ============================================================

\end{document}